\titlespacing\section{0pt}{0pt plus 2pt minus 2pt}{0pt plus 2pt minus 2pt}
\patchcmd{\BR@backref}{\newblock}{\newblock[page~}{}{}
\patchcmd{\BR@backref}{\par}{]\par}{}{}
\newcommand{\opnorm}[1]{\|#1\|_{\mathrm{op}}}
\newcommand{\lsim}{\raisebox{-0.13cm}{~\shortstack{$<$ \\[-0.07cm] $\sim$}}~}
\newcommand{\ie}{{i.e.,}\xspace}
\def\eqref#1{equation~\ref{#1}}
\def\1{\bm{1}}
\def\rw{{\textnormal{w}}}
\def\ry{{\textnormal{y}}}
\def\rvw{{\mathbf{w}}}
\def\rvx{{\mathbf{x}}}
\def\vv{{\bm{v}}}
\def\vx{{\bm{x}}}
\def\mA{{\bm{A}}}
\DeclareMathAlphabet{\mathsfit}{\encodingdefault}{\sfdefault}{m}{sl}
\SetMathAlphabet{\mathsfit}{bold}{\encodingdefault}{\sfdefault}{bx}{n}
\def\gD{{\mathcal{D}}}
\def\gF{{\mathcal{F}}}
\def\gL{{\mathcal{L}}}
\def\gN{{\mathcal{N}}}
\def\gO{{\mathcal{O}}}
\def\gR{{\mathcal{R}}}
\def\gW{{\mathcal{W}}}
\def\gX{{\mathcal{X}}}
\newcommand{\E}{\mathbb{E}}
\DeclareMathOperator*{\argmin}{arg\,min}
\DeclareMathOperator{\Real}{\mathbb{R}}
\newcommand{\paren}[1]{\left ( #1 \right ) }
\newcommand{\brck}[1]{\left [ #1 \right ] }
\newcounter{thm}
\newcounter{lem}
\newtheorem{theorem}[thm]{Theorem}
\newtheorem{lemma}[thm]{Lemma}
\newtheorem{sublemma}[lem]{Lemma}
\newtheorem{corollary}[thm]{Corollary}
\newcommand{\pmi}{\bm{\Pi}}
\newcommand{\ID}{\bm{I}_D}
\newcommand{\bmu}{\bm{\mu}}
\newcommand{\bv}{\bm{v}}
\Crefname{equation}{}{}
\Crefname{figure}{Fig.}{Figs.}
\Crefname{tabular}{Tab.}{Tables}
\Crefname{table}{Tab.}{Tables}
\Crefname{section}{Sec.}{Sections}
\def\shownotes{0}  %
\newcommand{\authnote}[2]{[#1: #2]}
\newcommand{\authnote}[2]{}
\newcommand{\lyh}[1]{{\color{red}{\authnote{Yoonho}{#1}}}}
\newcommand\numberthis{\addtocounter{equation}{1}\tag{\theequation}}
\newcommand{\ours}{\textsc{Pro}$^2$\xspace}
\title{Project and Probe: Sample-Efficient Domain Adaptation by Interpolating Orthogonal Features}
\author{
  Annie S. Chen$^{*1}$, Yoonho Lee$^{*1}$, Amrith Setlur$^2$, Sergey Levine$^3$, Chelsea Finn$^1$
  \vspace{1mm}\\
  Stanford University${^1}$, Carnegie Mellon University${^2}$, UC Berkeley${^3}$ 
  \vspace{1mm} \\
  \texttt{asc8@stanford.edu, yoonho@stanford.edu} 
}
\begin{document}
\maketitle

\begin{abstract}
Transfer learning with a small amount of target data is an effective and common approach to adapting a pre-trained model to distribution shifts. In some situations, target data labels may be expensive to obtain, so we may only have access to a limited number of target data points. 
To make the most of a very small target dataset, we propose a lightweight, sample-efficient approach that learns a diverse set of features and adapts to a target distribution by interpolating these features.
Our approach, \textsc{Project and Probe} (\ours), first learns a linear projection that maps a pre-trained embedding onto orthogonal directions while being predictive of labels in the source dataset.
The goal of this step is to learn a variety of predictive features, so that at least some of them remain useful after distribution shift.
\ours{} then learns a linear classifier on top of these projected features using a small target dataset.
Theoretically, we find that \ours results in more sample-efficient generalization by inducing a favorable bias-variance tradeoff.
Our experiments on four datasets, with multiple distribution shift settings for each, show that \ours improves performance by 5-15\% when given limited target data compared to prior methods such as standard linear probing.

\end{abstract}

\section{Introduction}

Machine learning models can face significant challenges when there is a distribution shift between training and evaluation data.
A model trained on a specific source dataset may not perform well when deployed on a target domain with a distribution of inputs that differs significantly from the source domain.
One common and reliable approach for adapting to distribution shifts is fine-tuning a trained model on a small amount of labeled data from the new target domain.
However, in some situations, target data labels may be expensive to obtain, which limits the number of available labeled datapoints for fine-tuning.
As an example, a hospital may have imaging software that slightly differs from what was used for dataset collection, but they may not be able to acquire many new labeled samples.
In such conditions, conventional fine-tuning approaches may overfit to the small target dataset and distort the information learned during initial training.
Therefore, we require a method that can reliably extract information from the new target domain with less overfitting.

Recent works have demonstrated the effectiveness of re-training a final linear head using target data for adapting to distribution shifts due to spurious correlations or domain shift~\citep{rosenfeld2022domain,kirichenko2022last,mehta2022embedding}.
However, it is unclear whether this standard approach of re-training a linear layer is the most data-efficient method to adapt pre-trained features to various target distributions.
While versatile, feature embeddings may not necessarily contain the most suitable set of features for adapting to target distributions: they may also contain redundant, non-predictive, or noisy information.
Our primary insight is that the key to more sample-efficient adaptation to target domains lies in starting with a compact and diverse set of useful features. 
Each feature in this set should not only be predictive, but also hold unique information distinct from others inside the set.
We leverage source data, which is substantially more abundant than target data, in performing this selection of features for target adaptation.

\begin{figure}[t]
    \centering
    \includegraphics[width=0.92\linewidth]{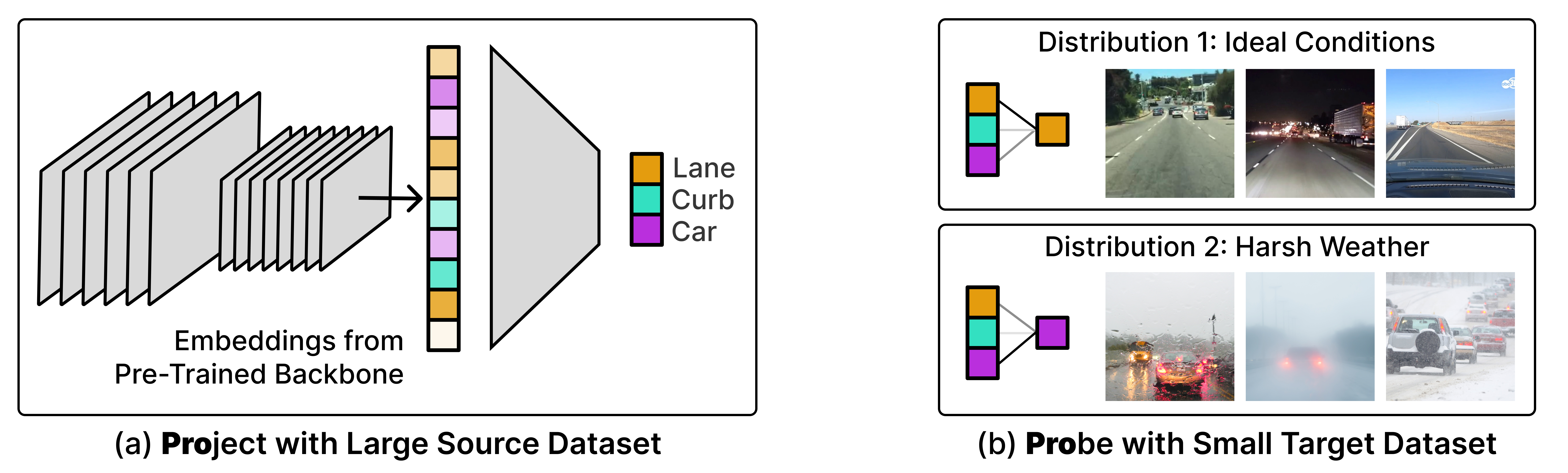}
    \caption{
    \label{fig:teaser}
    \textbf{The Project and Probe (\ours) framework for adapting to different target distributions.}
    (a) We first use a large source dataset to project pre-trained feature embeddings onto a set of predictive features while enforcing orthogonality.
    (b) For a new target distribution, we learn a linear layer on top of the projected features. This step adaptively chooses features in a data-efficient manner.
    }
\end{figure}

We propose \textsc{Project and Probe} (\ours), a simple and sample-efficient method for adapting to unknown target distributions.
\ours first learns a projection of pre-trained embedding vectors, which is optimized to extract a diverse set of features that are each predictive of labels.
More specifically, we first use a source dataset to project pre-trained feature embeddings onto a smaller set of predictive features. 
We enforce pairwise orthogonality among all features, thereby ensuring that each projected dimension carries unique information not present in others.
We expect this learned feature space to compactly contain a diverse set of predictive features while discarding information that is redundant or not predictive on the task.
\ours{} then uses the reduced set of features as a basis space for adaptation.
Specifically, we fit a linear head on top of the projected embedding using labeled target data.
Both the linear projection and the linear head require minimal computational overhead, making \ours a practical method for adapting to new target distributions.
\cref{fig:teaser} shows a visual summary of \ours{}.

To support our approach, we provide a theoretical analysis, in both a general setting with minimal distribution assumptions as well as the more specific setting of a shifted homoscedastic Gaussian model, showing how \ours{} learns a projection matrix that results in better generalization due to a favorable bias-variance tradeoff. From this analysis, \ours improves sample efficiency because it can learn useful, diverse features so that it is more likely to better recover the important directions for adaptation with a smaller projection dimension, allowing us to combat the variance introduced by a very small target dataset while maintaining low bias.
We conduct experiments on a variety of distribution shift settings across 4 datasets. We find that standard linear probing, which is the default method used by prior works, is not the most data-efficient adaptation approach. Using \ours, i.e. projecting with source data onto an informative feature-space basis and probing with target data, improves performance by 5-15\% in few-shot adaptation to new target distributions.

\section{Related Work}

\textbf{Robustness and zero-shot generalization.}
Many prior works aim to improve robustness to various distribution shifts~\citep{tzeng2014deep,ganin2016domain,arjovsky2019invariant,sagawa2019distributionally,nam2020learning,creager2021environment,liu2021just,zhang2022contrastive}.
Additionally, prior works have studied how to adapt pre-trained features to a target distribution via fine-tuning~\cite{oquab2014learning,yosinski2014transferable,sharif2014cnn}. 
Such fine-tuning works typically frame robustness to distribution shift as a zero-shot generalization problem~\cite{kornblith2018better,zhai2019large,Wortsman_2022_CVPR,kumar2022fine}, where the model is trained on source and evaluated on target. Both of the above classes of approaches fundamentally cannot handle the problem settings we consider, where a single function is insufficient for achieving good performance on different distributions. 
In this paper, we evaluate on a variety of test distributions, some of which are mutually exclusive, and it is therefore crucial to perform adaptation on the target distribution.

\textbf{Adapting to distribution shifts.} 
Recent works have proposed various methods for adapting models at test time with some labeled target data~\cite{sun2020test, varsavsky2020test, iwasawa2021test, wang2020tent,zhang2021adaptive,gandelsman2022test,lee2022surgical}. 
In particular, given a feature embedding produced by a pretrained network with sufficient expressivity, training a final linear head, also known as linear probing, suffices for adapting to datasets with spurious correlations~\cite{kirichenko2022last,mehta2022embedding,izmailov2022feature} as well as in the setting of domain generalization~\cite{rosenfeld2022domain}. 
As detailed further in \cref{sec:setting}, we specifically focus on scenarios in which we have very little target data (only 4 $\sim$ 256 datapoints). We find that in this setting, training a final linear head in the default manner is not the most data-efficient way to adapt. \ours, which breaks this training down into 2 steps, is able to more effectively extract useful features and interpolate between them for varying target distributions, leading to improved sample efficiency with limited target data.

\textbf{Learning diverse features for spurious datasets.} 
Neural networks tend to be biased towards learning simple functions that rely on shortcut features~\citep{arpit2017closer,gunasekar2018implicit,shah2020pitfalls,geirhos2020shortcut,pezeshki2021gradient,li_2022_whac_a_mole,lubana2022mechanistic}.
To better handle novel distributions, it is important to consider the entire set of functions that are predictive on the training data~\citep{fisher2019all,semenova2019study,xu2022controlling}.
Recent diversification methods discover such a set~\citep{teney2022evading,pagliardini2022agree,lee2022diversify}.
The latter two methods use additional assumptions such as unlabeled data. 
With a similar motivation to ours, \citet{teney2022evading} penalizes the similarity between different features, but does so with an additional loss term instead of explicitly enforcing orthogonality.
We observe that this implementation detail matters in~\cref{sec:experiments}, where \ours{} outperforms \citet{teney2022evading}.
A concurrent work~\citep{morwani2023simplicity} also proposes an orthogonal projection method to learn diverse classifiers.
However, the Probe step of \ours{} additionally interpolates between the orthogonal features, and we provide theoretical and empirical analysis of how distribution shift severity affects sample efficiency during probing.

\textbf{Compression \& feature selection.} 
In aiming to extract important features and discarding repetitive information, \ours is related to work on compression~\cite{may2019downstream} and information bottlenecks~\cite{tishby2000information,alemi2016deep}. 
Our method is also closely related to methods that learn projections such as principal component analysis (PCA) and linear discriminant analysis (LDA).
Beyond these representative methods, there is an immense body of work on feature selection~\citep{dash1997feature,liu2007computational,chandrashekar2014survey,li2017feature} and dimensionality reduction~\citep{lee2007nonlinear,sorzano2014survey,cunningham2015linear}.
Among all projection-based methods, LDA is the most related to ours, but it only learns the single most discriminative direction.
In Corollary~\ref{cor:hog-soln}, we show that \ours with dimensionality $d=1$ provably recovers the LDA direction in a shifted homoscedastic Gaussian model, and that using higher values of $d$ is critical in adapting to higher degrees of distribution shift.
Generally, most methods (including LDA) operate in the setting without distribution shift.

\section{Adaptation to Distribution Shift}
\label{sec:setting}

We now describe our problem setting, where the goal is to adapt a model so as to provide an accurate decision boundary under distribution shift given a limited amount of target distribution information.
We consider a source distribution $p_S(x, y)$ and multiple target distributions $p_T^1(x, y), p_T^2(x, y), \ldots$.
The source dataset $\mathcal{D}_S \in (\mathcal{X} \times \mathcal{Y})^N$ is sampled from the source distribution $p_S$.
We evaluate adaptation to each target distribution $p_T^i$ given a small set of labeled target data $\mathcal{D}_T^i \in (\mathcal{X} \times \mathcal{Y})^M$, where $M \ll N$ so the model must learn from both the source and target data for best performance.
We measure the post-adaptation average accuracy of the model on a held-out target dataset from the same distribution $p_T^i$.

We note that this setting differs from the setting studied in prior works on spurious correlations~\citep{sagawa2019distributionally}, which train a model only on source data $\mathcal{D}_S$ and evaluate the model's performance on the hardest target distribution (i.e., worst-group accuracy). 
This is also different from the setting used in fine-tuning methods for zero-shot generalization~\citep{Wortsman_2022_CVPR,kumar2022fine}: such methods fine-tune a pretrained model on source data $\mathcal{D}_S$ and directly evaluate performance on target data $\mathcal{D}_T^i$ without any exposure to labeled target data.
Compared to these zero-shot evaluation settings, we argue that a small amount of target data may realistically be required to handle the arbitrary distribution shifts that arise in the real world.
Target data can be an effective point of leverage because it can be available or easy to collect, and we find that even a small dataset can reveal a lot about what features are effective in the target distribution.
Our problem setting of adapting with target data has been used in some recent works~\citep{kirichenko2022last,rosenfeld2022domain,izmailov2022feature,lee2022surgical}, but we specifically focus on the setting in which we only have access to a very small target dataset, i.e., $M \ll N$.

\section{Project and Probe}
\label{sec:method}

\begin{figure}[t] \centering 
\begin{minipage}{.52\linewidth}
\input{sections/algorithm.tex}
\end{minipage}
\hfill
\begin{minipage}{.45\linewidth}
\centering
\includegraphics[width=\linewidth]{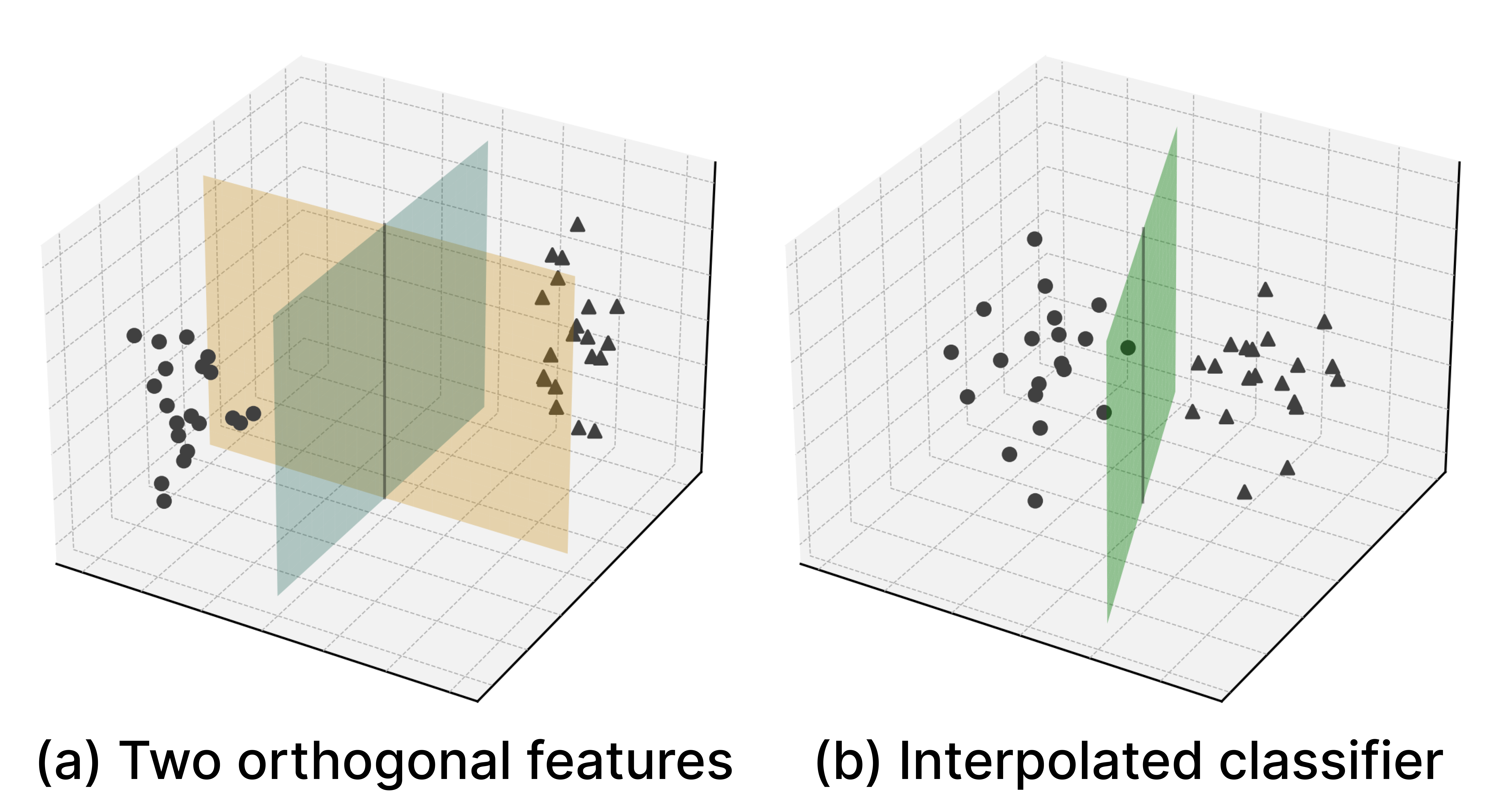}
\vspace{-6mm}
\captionof{figure}{
\label{fig:3d_vis}
Visualization of \ours{}: (a) orthogonal decision boundaries learned during the Project stage, and (b) the interpolated classifier learned during the Probe stage.}
\end{minipage}
\end{figure}

We now describe \ours, a framework for few-shot adaptation to distribution shifts.
\ours is composed of two steps: (1) learn a projection $\Pi$ that maps pre-trained embeddings onto orthogonal directions, and (2) learn a classifier $g$ using projected embeddings.

Before Step (1), we use a pre-trained backbone model $f: \mathcal{X} \rightarrow \mathbb{R}^D$ to map the datapoints to $D$-dimensional embeddings.
This backbone model extracts meaningful features from the raw inputs, resulting in a low-dimensional embedding space, for example $224 \times 224 \times 3$ images to $D=1024$-dimensional embeddings.

\textbf{Step 1: Project with source.}
Recall that we operate in the few-shot setting, where we may have fewer target datapoints than even embedding dimensions ($M < D$).
Intuitively, we would like to select a suitable decision boundary from a set of decision boundaries that worked well in the source domain. 
If this set is discrete, that might correspond to training some sort of diversified ensemble of linear classifiers on top of the features, a strategy adopted in some prior works~\citep{teney2021evading,lee2022diversify,pagliardini2022agree}. 

However, in general, we might need the expressive power of a continuous set of decision boundaries to adapt to the target domain, and we can construct this set by \emph{interpolating} over a basis of decision boundaries.
Mathematically, this is identical to selecting a set of linear features.
Thus, the question we must answer is: which set of linear features of the $D$-dimensional feature space should we retain?
First, it should be clear that the features should form an orthogonal basis, as otherwise they will be redundant. 
Second, the features should be discriminative, in the sense that they are sufficient to solve the desired prediction task. 
Lastly, there should not be too many of them, since the more features we include (i.e., the larger the rank of the basis we learn), the more samples we'll need from the target domain to find the best decision boundary in the corresponding set.

To learn a feature space that satisfies these desiderata, we parameterize a linear projection $\Pi: \mathbb{R}^D \rightarrow \mathbb{R}^d$ that maps the embeddings to a reduced space ($d \leq D$). 
Specifically, we use the source data to learn a complete orthonormal basis for the embedding space $\Pi_1, \Pi_2, \ldots, \Pi_d \in \Real^D$, by learning each basis vector with the constraint that it is orthogonal to all vectors before it:
\begin{align}
    \label{eq:pro2-obj}
    \Pi_i = \argmin \mathbb{E}_{(x, y) \sim \mathcal{D}_S} \mathcal{L}(\Pi_i(f(x)), y) 
    \quad \text{s.t.} \quad \Pi_j \perp \Pi_i \; \text{for all} \; j < i.
    \tag{\ours{}}
\end{align}
Note that this induces a natural ranking among the basis vectors.
This collection of orthogonal vectors constitute the rows of our projection matrix $\Pi$. In our implementation, we do projected gradient descent, enforcing orthogonality using QR decomposition on the projection matrix after every gradient step. See \cref{sec:app-exp_details} for a short PyTorch implementation.

Empirically and theoretically, we find that it is particularly beneficial to use a small $d \ll D$, even $d = 1$, in when adapting to small distribution shifts and use larger $d$ for more severe distribution shifts.

\textbf{Step 2: Probe with target.}
After learning $\Pi$, we learn a classifier $g: \mathbb{R}^d \rightarrow \mathcal{Y}$ that maps the projected embeddings to the target labels:
\begin{align*}
    g = \argmin \mathbb{E}_{(x, y) \sim \mathcal{D}_T} \mathcal{L}(g(\Pi(f(x))), y).
\end{align*}
Since the projection $\Pi$ was optimized to a diverse set of the most discriminative features for the source data, we expect the initial projected features to be particularly predictive when the distribution shift is relatively small.

In summary, \ours is a simple and lightweight framework that addresses the problem of few-shot adaptation in the presence of distribution shifts. 
We summarize its overall structure in \cref{alg:project_and_probe} and show a simplified 3D visualization in \cref{fig:3d_vis}.
In our implementation, we use cached embeddings for all source and target datapoints, such that feeding raw inputs through $f$ is a one-time cost that is amortized over epochs and experiments, making our framework scalable and efficient.
As an orthogonal improvement to our work, one could additionally fine-tune the backbone network on source data.
In \cref{sec:analysis}, we theoretically analyze the properties of the projection and classifier learned by \ours{}.
We then empirically evaluate \ours{} on a variety of distribution shifts and publicly available backbone networks in \cref{sec:experiments}.

\section{Analysis}
\label{sec:analysis}

In this section, we present a theoretical analysis of \ours, aiming to understand how our proposed orthogonal feature selection procedure can lead to sample-efficient adaptation under distribution shifts. 
Intuitively, the more shift we can expect, the more features we should need to adapt to it, which in turn requires more samples during adaptation (to fit the features accurately).
However, the choice of how we extract features influences the rate at which the sample complexity grows under distribution shift: while large shifts may still require many features, if the features are prioritized well, then smaller shifts might require only a very small number of features, and thus require fewer samples. 

In our analysis, we first show that using fewer features $(d)$ leads to lower variance, which scales as $(\gO(\sqrt{d/M}))$ given $M$ target samples, but at a cost in bias, which in some cases scales as $\gO(\sqrt{1-(d/D)} \cdot \mathrm{KL}(p_S||p_T))$, which grows with the amount of shift between the source and target distributions ($p_S, p_T$). 
In \cref{subsec:analysis-general}, we first analyze the specific features learned by \ours{} with minimal distributional assumptions. 
Then, in \cref{subsec:analysis-shog}, we apply our general results to a shifted homoscedastic Gaussian (SHOG) model, where the bias and variance terms involve more intuitive terms. 
We also empirically verify our results using synthetic SHOG data.
Additional theoretical results and proofs can be found in \cref{sec:app:theory}.

\subsection{Bias-variance tradeoffs for general shifts.}
\label{subsec:analysis-general}

In this section, we analyze the properties of the learned projection $\Pi$ on the \textit{target distribution} to understand why \ours may improve sample efficiency during adaptation by first extracting a set of diverse, useful features.

\textbf{Probing on the target distribution.} 
We first introduce some additional notation specific to the target distribution.
For projection $\Pi$, let $\pmi_d$ denote the projection matrix for $\mathrm{span}(\{\Pi_i\}_{i=1}^d)$, \ie 
\begin{align}
\pmi_d =[\Pi_1, .., \Pi_d][\Pi_1, .., \Pi_d]^\top.
\end{align}
Denote the target error for classifier $\rvw$ as $\gL_T(\rvw) \triangleq \E_{p_T} l(\langle \rvw, \rvx \rangle, \;\ry),$ and the bias incurred by probing over the projected features $\mathrm{span}(\{\Pi_i\}_{i=1}^d)$ as: 
\begin{align*}
    b_d \; \triangleq \; \min_{\rvw' \in \mathrm{span}(\{\Pi_i\}_{i=1}^d)} \gL_T (\rvw')  \; -\; \min_{\rvw \in \gW} \gL_T (\rvw).
\end{align*}
We also denote the $d$-dimensional weight vector learned by \ours on the $M$ projected target samples as:
\begin{align*}
    \hat{\rvw}_d \triangleq \min_{\begin{subarray}{r}\rvw \in \mathrm{span}(\{\Pi_i\}_{i=1}^d) \\ \;\;\;\;\; \|\rvw\|_2 \leq 1\end{subarray}} \;\; \sum_{i=1}^{M} l(\langle \rvw, \rvx^{(i)} \rangle,\; \ry^{(i)}).
\end{align*}

We are now ready to bound the bias $b_d$ in Lemma~\ref{lem:bias-under-general-shift}, with a term that reduces to 
$0$ as we add more features $d \rightarrow D$. The rate at which $b_d \rightarrow 0$ is controlled by the relationship of the optimal linear classifier on target $\rvw^*_T$ with the 
projection matrix $\pmi_d$ learnt on the source data. 
When there is no distribution shift, we know that for the projection $\Pi_1$ returned by \ours{}, $\Pi_1 \propto \rvw^*_T$, and thus $(\ID-\pmi_1)\rvw^*_T = 0$, \ie the bias $b_d \rightarrow 0$ with just one direction. 
On the other hand if $\Pi_d$ is returned by a random projection then bias $b_d$ decreases at rate $\gO(\sqrt{1-(d/D)})$ even when there is no distribution shift.
In simpler terms, the rate at which the bias reduces as we increase $d$ is controlled by degree of distribution shift, and how informative the source features (in $\Pi_d$) remain under this shift.

\begin{lemma}[bias induced by shift]
    \label{lem:bias-under-general-shift}
    For some $\rvw_T^*$ that is the Bayes optimal linear predictor on distribution $p_T$ over the full feature space,
    and an $L$-Lipschitz smooth convex loss $l$, the bias $b_d \leq L\cdot \|(\ID - \pmi_d)\rvw_T^*\|_2$. 
    When $\pmi_d$ is a random rank $d$ projection matrix  with columns drawn uniformly over the sphere $S^{d-1}$, then $b_d \lsim L\sqrt{1-\frac{d}{D}}\cdot \|\rvw_T^*\|_2$.
\end{lemma}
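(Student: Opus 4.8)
The plan is to prove the two claims in sequence, handling the deterministic inequality $b_d \le L\|(\ID-\pmi_d)\rvw_T^*\|_2$ first and then specializing it to a random projection. The key observation for the deterministic bound is that $\pmi_d$ is the orthogonal projector onto $\mathrm{span}(\{\Pi_i\}_{i=1}^d)$, so $\pmi_d\rvw_T^*$ is a \emph{feasible} competitor in the inner minimization defining $b_d$: it lies in the span, and since projection is non-expansive, $\|\pmi_d\rvw_T^*\|_2 \le \|\rvw_T^*\|_2$ respects the norm constraint on $\gW$. First I would therefore upper-bound the subspace minimum by its value at this point,
\begin{align*}
\min_{\rvw'\in\mathrm{span}(\{\Pi_i\}_{i=1}^d)}\gL_T(\rvw') \;\le\; \gL_T(\pmi_d\rvw_T^*),
\end{align*}
so that $b_d \le \gL_T(\pmi_d\rvw_T^*) - \gL_T(\rvw_T^*)$ after identifying the second term of $b_d$ with $\gL_T(\rvw_T^*)$. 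The remaining step controls this gap via the $L$-Lipschitz property of $l$: for each fixed $(\rvx,\ry)$ we have $|l(\langle\pmi_d\rvw_T^*,\rvx\rangle,\ry)-l(\langle\rvw_T^*,\rvx\rangle,\ry)| \le L\,|\langle(\ID-\pmi_d)\rvw_T^*,\rvx\rangle|$, and taking $\E_{p_T}$ then applying Cauchy--Schwarz gives $b_d \le L\,\|(\ID-\pmi_d)\rvw_T^*\|_2\,\E_{p_T}\|\rvx\|_2$. Under the bounded-feature normalization implicit in the setup (so that $\E_{p_T}\|\rvx\|_2 \le 1$, equivalently treating $\rvw\mapsto\gL_T(\rvw)$ as $L$-Lipschitz in the weight), this is exactly the claimed bound.

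For the random-projection claim I would pass through the expected squared residual. Fixing $\rvv = \rvw_T^*/\|\rvw_T^*\|_2$ and letting $\pmi_d$ project onto a uniformly random $d$-dimensional subspace of $\R^D$, rotational invariance of the Haar measure lets me equivalently fix the subspace to be the span of the first $d$ coordinates and take $\rvv$ uniform on the sphere, whence $\E\|\pmi_d\rvv\|_2^2 = \sum_{i=1}^d \E[v_i^2] = d/D$ by symmetry of the coordinates. Consequently $\E\|(\ID-\pmi_d)\rvw_T^*\|_2^2 = (1-d/D)\|\rvw_T^*\|_2^2$, and Jensen's inequality yields $\E\|(\ID-\pmi_d)\rvw_T^*\|_2 \le \sqrt{1-d/D}\,\|\rvw_T^*\|_2$. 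Combining with the deterministic bound gives $\E[b_d]\lsim L\sqrt{1-d/D}\,\|\rvw_T^*\|_2$; a high-probability version would follow from concentration of the Lipschitz map $\pmi_d \mapsto \|(\ID-\pmi_d)\rvw_T^*\|_2$ under the Haar/Gaussian measure, which I would only invoke if a non-expectation statement is wanted.

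The main obstacle is not any single estimate but pinning down the normalization that turns the Lipschitz-in-output property of $l$ into a Lipschitz-in-weight bound: the clean constant in front of $\|(\ID-\pmi_d)\rvw_T^*\|_2$ requires controlling $\E_{p_T}\|\rvx\|_2$ (or an almost-sure feature bound), and I would make this assumption explicit rather than leave it buried. A secondary point is interpreting ``columns drawn uniformly over the sphere $S^{d-1}$'': read literally the columns would be neither orthonormal nor in $\R^D$, so I would instead read the hypothesis as ``$\pmi_d$ projects onto a uniformly random $d$-dimensional subspace of $\R^D$,'' which is the only reading under which the $d/D$ identity holds exactly. Everything else reduces to the routine manipulations above.
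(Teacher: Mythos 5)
Your proof is correct, and in fact there is nothing in the paper to compare it against: the appendix never gives a standalone proof of Lemma~\ref{lem:bias-under-general-shift} --- it is only invoked as a black box in the proofs of Theorem~\ref{thm:bv-tradeoff} and Lemma~\ref{lem:bias-under-shog} --- so your argument fills a genuine gap rather than diverging from an existing route. Both halves are the natural and standard ones: using $\pmi_d\rvw_T^*$ as the feasible competitor in the span (note the inner minimum in the definition of $b_d$ carries no norm constraint, so feasibility is immediate, though your non-expansiveness remark is a harmless safeguard), then Lipschitzness of $l$ in the prediction plus Cauchy--Schwarz; and for the random projection, rotational invariance giving $\E\|\pmi_d\rvv\|_2^2 = \rvv^\top\E[\pmi_d]\rvv = d/D$ for unit $\rvv$, followed by Jensen. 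The two caveats you flag are real and correctly resolved. First, the clean constant $L$ in front of $\|(\ID-\pmi_d)\rvw_T^*\|_2$ silently requires a bound like $\E_{p_T}\|\rvx\|_2 = \gO(1)$ (or Lipschitzness of $\gL_T$ directly in $\rvw$); the paper's only stated normalization, $\|\rvx\|_\infty = \gO(1)$ in Theorem~\ref{thm:bv-tradeoff}, would yield $\E\|\rvx\|_2 = \gO(\sqrt{D})$ and thus a dimension-dependent constant, so making the second-moment assumption explicit, as you do, is the right call. Second, ``columns drawn uniformly over the sphere $S^{d-1}$'' is indeed incoherent as written (the columns live in $\R^D$ and independent uniform vectors are not exactly orthonormal); reading it as the projector onto a Haar-random $d$-dimensional subspace, presumably with $S^{D-1}$ intended, is the only interpretation under which the $d/D$ identity is exact. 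One minor point: the lemma's conclusion for the random case is stated deterministically under $\lsim$, so you should say explicitly whether you mean the bound in expectation (which Jensen gives immediately) or with high probability (which needs the concentration step you sketch); given the paper's loose use of $\lsim$, either suffices, but the statement should commit to one.
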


In Theorem~\ref{thm:bv-tradeoff}, we describe the full bias-variance tradeoff where we see that the variance term is also controlled by the number of features $d$ but unlike the bias is independent of the nature of shift between source and the target. 

\begin{theorem}[bias-variance tradeoff]
    \label{thm:bv-tradeoff}
    When the conditions in Lemma~\ref{lem:bias-under-general-shift} hold and when $\|\rvx\|_\infty = \gO(1)$, for $B$-bounded loss $l$,  w.h.p. $1- \delta$, the excess risk for the solution $\hat{\rvw}_d$ of \ours{} that uses $d$ features is $\gL_T(\hat{\rvw}_d) - \min_{\rvw \in \gW} \gL_T(\rvw)$ 
    \begin{equation}
      \lsim   \|(\ID - \pmi_d)\rvw_T^*\|_2 + \paren{\frac{\sqrt{d} + B\sqrt{\log(1/\delta)}}{\sqrt{M}}},
    \end{equation}
    where the first term controls the bias and the second controls the variance. 
\end{theorem}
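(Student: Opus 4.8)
The plan is a standard bias--variance (approximation/estimation) split of the excess risk, made around a single well-chosen comparator, followed by separate control of the two pieces. Let $\rvw_T^*$ denote the Bayes-optimal linear predictor on $p_T$, so that $\min_{\rvw \in \gW}\gL_T(\rvw) = \gL_T(\rvw_T^*)$, and take the comparator to be $\pmi_d\rvw_T^*$, the $\ell_2$-projection of $\rvw_T^*$ onto $\mathrm{span}(\{\Pi_i\}_{i=1}^d)$. Then
\begin{align*}
\gL_T(\hat{\rvw}_d) - \gL_T(\rvw_T^*)
= \underbrace{\big(\gL_T(\hat{\rvw}_d) - \gL_T(\pmi_d\rvw_T^*)\big)}_{\text{estimation (variance)}}
\;+\; \underbrace{\big(\gL_T(\pmi_d\rvw_T^*) - \gL_T(\rvw_T^*)\big)}_{\text{approximation (bias)}}.
\end{align*}
Since $\norm{\pmi_d\rvw_T^*}_2 \le \norm{\rvw_T^*}_2$, the comparator is feasible for the constrained program defining $\hat{\rvw}_d$ provided $\norm{\rvw_T^*}_2 \le 1$, which I would record as a normalization (or enforce by rescaling); this legitimizes using $\pmi_d\rvw_T^*$ inside the estimation term.

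The bias term is immediate from $L$-Lipschitzness of the loss in its first argument: $\gL_T(\pmi_d\rvw_T^*) - \gL_T(\rvw_T^*) \le L\,\E_{p_T}\abs{\langle \pmi_d\rvw_T^* - \rvw_T^*, \rvx\rangle} \le L\norm{(\ID - \pmi_d)\rvw_T^*}_2$ after bounding $\norm{\rvx}$, which is exactly the content of Lemma~\ref{lem:bias-under-general-shift}. Treating $L$ as a constant absorbed by $\lsim$, this supplies the first (bias) term of the claimed bound.

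For the estimation term I would run the textbook empirical-risk-minimization argument over the $d$-dimensional hypothesis class $\gF = \{\rvx \mapsto \langle \rvw, \rvx\rangle : \rvw \in \mathrm{span}(\{\Pi_i\}_{i=1}^d),\ \norm{\rvw}_2 \le 1\}$. Because $\hat{\rvw}_d$ minimizes the empirical target risk over $\gF$ and $\pmi_d\rvw_T^* \in \gF$, the usual two-sided inequality gives $\gL_T(\hat{\rvw}_d) - \gL_T(\pmi_d\rvw_T^*) \le 2\sup_{\rvw \in \gF} \abs{\widehat{\gL}_T(\rvw) - \gL_T(\rvw)}$. I would then bound the expected supremum by symmetrization and Talagrand's contraction lemma (removing the $L$-Lipschitz loss) in terms of the empirical Rademacher complexity of the linear class, and upgrade to a high-probability statement via McDiarmid's bounded-difference inequality: for a $B$-bounded loss, swapping one of the $M$ samples moves the supremum by at most $B/M$, producing the $B\sqrt{\log(1/\delta)}/\sqrt{M}$ fluctuation.

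The crux --- and the step I expect to be the main obstacle --- is to show the complexity term carries the projection rank $d$ rather than the ambient dimension $D$. Writing each weight as supported on the span, so $\langle \rvw, \rvx\rangle = \langle \rvw, \pmi_d\rvx\rangle$, the Rademacher average collapses to $\tfrac1M\,\E_\sigma \norm{\pmi_d \sum_i \sigma_i \rvx^{(i)}}_2 \le \tfrac1M\big(\sum_i \norm{\pmi_d\rvx^{(i)}}_2^2\big)^{1/2}$ by Jensen and independence of the Rademacher signs, so everything hinges on bounding the $\ell_2$ norm of the $d$-dimensional projected representation. The delicate point is that one must obtain $\norm{\pmi_d\rvx}_2 = \gO(\sqrt{d})$ --- i.e. that each of the $d$ projected coordinates stays $\gO(1)$ under the feature-boundedness assumption $\norm{\rvx}_\infty = \gO(1)$ --- rather than the trivial $\norm{\pmi_d\rvx}_2 \le \norm{\rvx}_2 = \gO(\sqrt{D})$, since only the former yields the advertised $\sqrt{d/M}$ rate and exposes the benefit of projecting to low dimension. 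With that bound in hand the empirical Rademacher complexity is $\gO(\sqrt{d/M})$, and adding the bias and estimation pieces and absorbing $L$ and universal constants into $\lsim$ gives the stated excess-risk bound.
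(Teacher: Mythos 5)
Your proposal is correct and follows essentially the same route as the paper: the paper's proof of Theorem~\ref{thm:bv-tradeoff} makes the identical approximation/estimation split (the bias term bounded via the Lipschitz projection argument of Lemma~\ref{lem:bias-under-general-shift}, the estimation term via Lemma~\ref{lem:gen-error}, which instantiates your symmetrization-plus-Rademacher plan using \citet{bartlett2002rademacher} for the $2L\gR_M(\gW)+B\sqrt{\log(1/\delta)/2M}$ bound and the strong-convexity result of \citet{kakade2008complexity} to get $\gR_M(\gW) \lsim \sqrt{d/M}$). The crux you flag is dispatched in the paper by the one-line chain $\|\pmi_d\rvx\|_2 \leq \opnorm{\pmi_d}\|\rvx\|_2 \leq \opnorm{\pmi_d}\sqrt{d}\,\|\rvx\|_\infty \lsim \sqrt{d}$, and your caution there is well placed: as written, the middle step for an ambient $D$-dimensional $\rvx$ only yields $\sqrt{D}\,\|\rvx\|_\infty$, so the paper effectively asserts rather than derives the $\gO(\sqrt{d})$ projected-norm bound you identified as the delicate point.
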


This result provides insights on what factors affect generalization when probing on target data.
Tighter compression of the original representation, i.e., using a smaller $d$, increases bias while decreasing variance.
The rate of bias increase is determined by the degree of distribution shift, where more severe shifts correspond to a steeper increase in bias. However, this bias can be mitigated as long as the important features needed for prediction on the target domain are covered by the compressed representation. Thus, \ours induces a favorable bias-variance tradeoff, as the features extracted are predictive and diverse and hence are more likely to cover the important features needed for the target domain, allowing compression to a smaller $d$ while still maintaining low bias.
The distribution shift has no effect on variance, and variance can only be decreased by using a low-dimensional represent (at the cost of bias) or learning from a larger dataset.

\subsection{Bias-variance tradeoff in shifted Gaussian model.}
\label{subsec:analysis-shog}

\begin{figure*}
    \centering
    \includegraphics[width=0.25\linewidth]{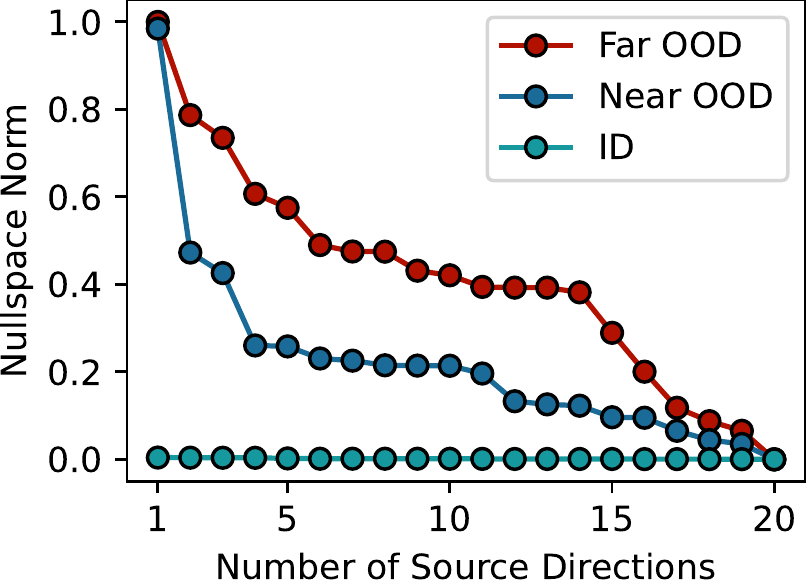} \hfill
    \includegraphics[width=0.25\linewidth]{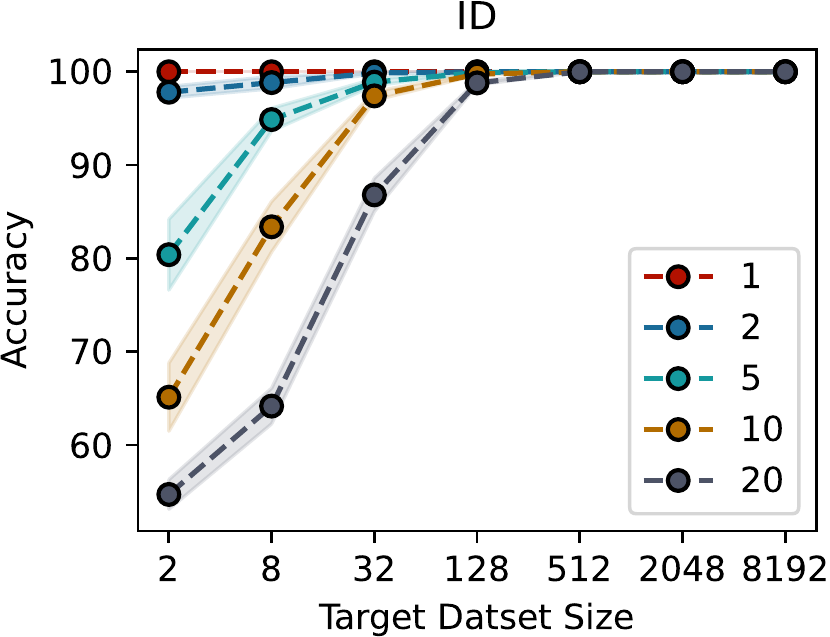}
    \includegraphics[width=0.23\linewidth]{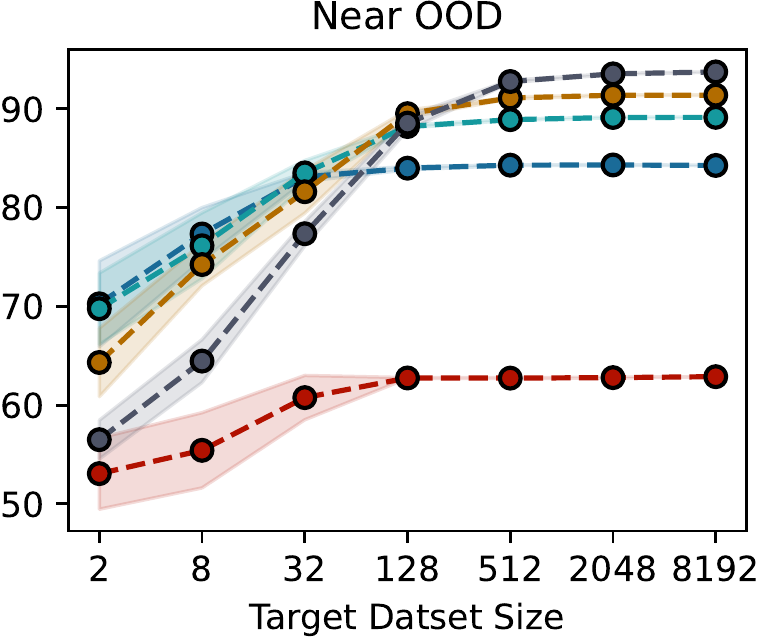}
    \includegraphics[width=0.235\linewidth]{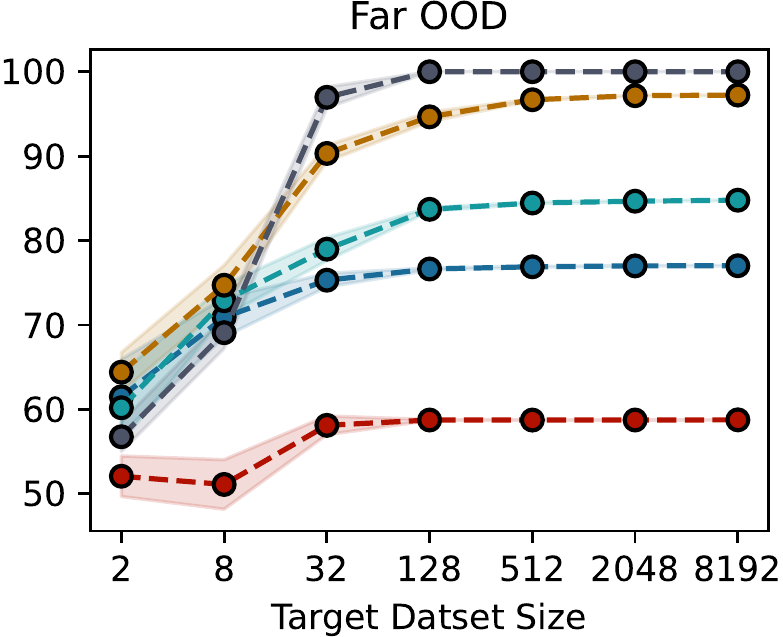}
    \caption{
    \textbf{Evaluation of \ours{} on shifted homoscedastic Gaussian data.} 
    (Left) The x- and y-axes denote dimensionality of $A_d$ and nullspace norm, respectively. Nullspace norm drops slowly for more severe distribution shifts.
    (Right) For less severe distribution shifts (ID and Near OOD), low-dimensional projections suffer from less bias, resulting in higher accuracy in the low-data regime. For the Far OOD distribution, using all $20$-dimensional features is best, as bias drops more slowly.
    }
    \label{fig:analysis_hog}
\end{figure*}

In this subsection, we consider a simplified setting of a shifted homoscedastic Gaussian (SHOG).
Within this model, we show that the more general statement in Theorem~\ref{thm:bv-tradeoff} can be simplified further to provide a more intuitive relationship between the factors that affect generalization.
Furthermore, we empirically demonstrate the behavior predicted by our bounds on synthetic SHOG data.

\textbf{Shifted homoscedastic Gaussian (SHOG) model of distribution shift.}
We model the source distribution as a Bernoulli mixture model of data in which binary labels are balanced ($\ry \sim \textrm{Bern}(0.5)$) and the class conditional distributions are homoscedastic multi-variate Gaussians: 
\begin{align*}
\rvx \mid \ry \sim \mathcal{N}(\mu_\ry, \Sigma_S) \quad \text{for} \quad \ry \in \{0, 1\}, 
\end{align*}
where $\mu_1, \mu_2 \in \Real^D$ are mean vectors and $\Sigma_S \in \Real^{D \times D}$ is the shared covariance matrix. The target distribution has the same label distribution and Gaussian means, but a different covariance matrix given by $\Sigma_T$.
We study how the relation between the two covariance matrices $\Sigma_S, \Sigma_T$ can affect the bias term $b_d$ when $\Pi_d$ is either returned by \ours{} or a random projection matrix with columns drawn uniformly over the  sphere $S^{d-1}$.

We specialize the more general bias-variance tradeoff result to a shifted homoscedastic Gaussian (SHOG) model in Corollary~\ref{corr:hog-bv-tradeoff}, where we 
derive a simpler bound characterizing the tradeoff between performance, the value of $d$, and the amount of distributional shift.

\begin{corollary}[tradeoff under SHOG]
    \label{corr:hog-bv-tradeoff}
    Under our SHOG model of shift, and conditions for a random projection $\pmi_d$ in Lemma~\ref{lem:bias-under-shog}, the target error $\gL_T(\hat{\rvw}_d) \lsim \gO\paren{\sqrt{1-\frac{d}{D}} \cdot \mathrm{KL}(p_S || p_T)} + \sqrt{\frac{d}{M}}$, when $\opnorm{\Sigma_T} = O(1)$.
\end{corollary}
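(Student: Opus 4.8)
The plan is to read the claim off as a direct specialization of the general bias--variance bound in Theorem~\ref{thm:bv-tradeoff}, instantiated in the SHOG model. Theorem~\ref{thm:bv-tradeoff} already gives, with high probability,
\[
\gL_T(\hat{\rvw}_d) - \min_{\rvw \in \gW} \gL_T(\rvw) \lsim \|(\ID - \pmi_d)\rvw_T^*\|_2 + \frac{\sqrt{d} + B\sqrt{\log(1/\delta)}}{\sqrt{M}},
\]
so two reductions remain: (i) simplify the variance term, and (ii) bound the bias term $\|(\ID - \pmi_d)\rvw_T^*\|_2$ in terms of the shift. For (i) I treat the loss bound $B$ and the failure probability $\delta$ as $O(1)$ constants, which collapses the second summand to $\sqrt{d/M}$ and absorbs the $\min_{\rvw}\gL_T(\rvw)$ offset into the stated $\gO(\cdot)$ (equivalently, one reads $\gL_T(\hat{\rvw}_d)$ as excess target risk). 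The substance of the corollary is therefore (ii), which I carry out in the SHOG model by invoking Lemma~\ref{lem:bias-under-shog}.

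For the bias, I would first identify the Bayes-optimal linear predictor on the target. Since $p_T$ is a balanced homoscedastic Gaussian mixture with means $\mu_0,\mu_1$ and shared covariance $\Sigma_T$, its Bayes-optimal linear direction is the LDA direction $\rvw_T^* \propto \Sigma_T^{-1}(\mu_1 - \mu_0)$. Next, because $\pmi_d$ is a random rank-$d$ projection with columns drawn uniformly on the sphere $S^{d-1}$, the concentration argument already used in the random-projection case of Lemma~\ref{lem:bias-under-general-shift} applies verbatim to the fixed vector $\rvw_T^*$ and yields $\|(\ID - \pmi_d)\rvw_T^*\|_2 \lsim \sqrt{1-d/D}\,\|\rvw_T^*\|_2$. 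It then remains to convert the factor $\|\rvw_T^*\|_2 = \|\Sigma_T^{-1}(\mu_1-\mu_0)\|_2$ into the divergence $\mathrm{KL}(p_S \| p_T)$. Here I would use the closed form for the SHOG divergence: since source and target share the label law and the means, the joint KL reduces to the same-mean Gaussian KL between the class-conditionals, $\mathrm{KL}(p_S\|p_T) = \tfrac{1}{2}\big[\mathrm{tr}(\Sigma_T^{-1}\Sigma_S) - D + \log(\det\Sigma_T/\det\Sigma_S)\big]$, which depends only on $(\Sigma_S,\Sigma_T)$ and captures precisely the covariance mismatch driving the target bias. Combining the three estimates gives the claimed bias term $\sqrt{1-d/D}\cdot\mathrm{KL}(p_S\|p_T)$, and adding the simplified variance term finishes the bound.

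The main obstacle is the final move in step (ii): relating the magnitude of the target LDA direction to the KL divergence, which is exactly the content of Lemma~\ref{lem:bias-under-shog}. The difficulty is that $\|\rvw_T^*\|_2$ couples the mean gap with $\Sigma_T^{-1}$, whereas the KL is a function of the covariances alone, so a naive bound risks being vacuous when the shift is mild. This is where the hypothesis $\opnorm{\Sigma_T} = O(1)$ is essential: controlling the spectrum of $\Sigma_T$ (and hence of the $\Sigma_T^{-1}$ appearing both in $\rvw_T^*$ and in $\mathrm{tr}(\Sigma_T^{-1}\Sigma_S)$) places the two quantities on a common scale and ensures $\|\rvw_T^*\|_2 \lsim \mathrm{KL}(p_S\|p_T)$ once the covariance shift is non-negligible. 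I would handle the conditioning carefully at that step; the remainder of the argument is bookkeeping that plugs the SHOG bias bound into Theorem~\ref{thm:bv-tradeoff} and rescales constants.
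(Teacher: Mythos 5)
Your overall architecture is exactly the paper's: the paper proves this corollary in one line by plugging the bias bound of Lemma~\ref{lem:bias-under-shog} and the variance bound of Lemma~\ref{lem:gen-error} into the decomposition of Theorem~\ref{thm:bv-tradeoff}, and your steps (i)--(ii) simply inline Lemma~\ref{lem:bias-under-shog}: identify $\rvw_T^* \propto \Sigma_T^{-1}(\mu_2-\mu_1)$ via the LDA characterization (cf.\ Corollary~\ref{cor:hog-soln}), apply the random-projection concentration from Lemma~\ref{lem:bias-under-general-shift} to get $\|(\ID - \pmi_d)\rvw_T^*\|_2 \lsim \sqrt{1-d/D}\,\|\rvw_T^*\|_2$, and invoke the same-mean Gaussian KL formula; treating $B$ and $\delta$ as constants for the variance term is likewise what the paper does.

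However, the step you yourself identify as the main obstacle is genuinely missing, and as you state it, it fails. The inequality $\|\rvw_T^*\|_2 \lsim \mathrm{KL}(p_S \| p_T)$ cannot hold unconditionally: letting $\Sigma_T \to \Sigma_S$ sends $\mathrm{KL}(p_S \| p_T) \to 0$ while $\|\Sigma_T^{-1}(\mu_2-\mu_1)\|_2 \to \|\Sigma_S^{-1}(\mu_2-\mu_1)\|_2 > 0$, so the bound is false precisely in the mild-shift regime, and your hedge ``once the covariance shift is non-negligible'' is an added hypothesis that appears nowhere in the statement. Your use of $\opnorm{\Sigma_T} = O(1)$ also points the wrong way: an upper bound on the spectrum of $\Sigma_T$ \emph{lower}-bounds the spectrum of $\Sigma_T^{-1}$, so it cannot control $\|\Sigma_T^{-1}\bmu\|_2$ from above. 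In the paper, the spectral hypothesis doing work in the bias argument is $\opnorm{\Sigma_S} = O(1)$, used to bound $\mathrm{tr}(\Sigma_T^{-1}\Sigma_S) \leq \opnorm{\Sigma_S}\,\mathrm{tr}(\Sigma_T^{-1})$, while $\opnorm{\Sigma_T} = O(1)$ is more naturally tied to the $\|\rvx\|_\infty = O(1)$ requirement in the variance bound of Lemma~\ref{lem:gen-error}. In fairness, the paper's own proof of Lemma~\ref{lem:bias-under-shog} has the same unresolved step: it establishes only the two upper bounds $b_d \lsim \sqrt{1-d/D}\,\mathrm{tr}(\Sigma_T^{-1})$ and $\mathrm{KL}(p_S \| p_T) \lsim \mathrm{tr}(\Sigma_T^{-1})$, which compose in incompatible directions (a leftover author comment in the appendix flags exactly this); closing the gap would require a reverse inequality $\mathrm{tr}(\Sigma_T^{-1}) \lsim \mathrm{KL}(p_S \| p_T)$, which holds only under further spectral assumptions. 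So you have reproduced the paper's route, including the one step that neither you nor the paper actually proves---but your writeup should not present that step as handled when it is the entire substance of the claim.
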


In \cref{fig:analysis_hog}, we plot the nullspace norm $\|\Sigma_S\|_{\mathrm{op}}$ for different $d$ in three target distributions of varying distribution shift severity in the SHOG model.
We see that the more severe shifts have a higher norm, indicating that the OOD distributions suffer from high bias when $d$ is low.
Indeed, we see that the ID distribution suffers from virtually no bias, making $d=1$ achieve highest target accuracy for all dataset sizes.
In contrast, the Near OOD and Far OOD distributions suffer from high bias of up to $40\%$ accuracy, and higher projection dimension $d$ is needed for adaptation, as predicted by Corollary~\ref{corr:hog-bv-tradeoff}.

\section{Experiments}
\label{sec:experiments}

In this section, we aim to empirically answer the following questions: 
(1) Can \ours identify a feature-space basis for rapid adaptation, and how does it compare to other methods for extracting features? 
(2) How does the dimensionality of the feature-space basis affect sample efficiency in different distribution shift conditions?
We provide additional empirical results and analyses, such as showing that the adaptation performance of \ours improves with better pre-trained backbones, in \cref{sec:app-exps}. Details on pre-trained models and training details are in \cref{sec:app-exp_details}.

\subsection{Experimental Setup}
\textbf{Datasets.} 
We run experiments on six datasets with distribution shifts: 4-way collages~\citep{teney2021evading}, Waterbirds~\citep{sagawa2019distributionally}, CelebA~\citep{liu2015faceattributes}, Camelyon~\citep{camelyon}, Living17~\citep{santurkar2020breeds}, and FMoW~\citep{koh2021wilds} datasets.
Each of these datasets have a source distribution that we use for training. For the first four datasets, we construct multiple target distributions for evaluation, representative of a range of potential test distributions. For the latter two datasets, which are larger datasets representing shifts that may occur in the wild, we evaluate on the given test set.
For all settings, we use the original source datasets, which each contain thousands of datapoints.
For target data, we subsample very small label-balanced datasets for adaptation, with $\{2, 8, 32, 128\}$ images per label for the first four datasets and $\{1, 2, 5\}$ images per label for the latter two datasets. The remaining target distribution datapoints are used for evaluation.
Due to space constraints, we describe the different target distributions in~\cref{sec:app-exp_details}.

\textbf{Computational efficiency.}
Similarly to \citet{mehta2022embedding}, we use feature embeddings from a pre-trained backbone without fine-tuning.
Our aim is to develop methods that can leverage pretrained models out-of-the-box with minimal computational requirements: our training involves at most two linear layers on top of cached feature vectors.
For all comparisons, we hyperparameter tune over 3 different learning rates (0.1, 0.01, and 0.001) as well as 3 different $L2$ regularization weights (0.1, 0.01, 0.001). In our main experiments in \cref{subsec:prior-methods}, we also sweep over 6 different projection dimensions ($d=1, 4, 16, 64, 256, 1024$) and report results over 10 runs. 
For hyperparameter tuning, we adopt the typical practice of using a target validation set, which is common in prior work in similar transfer learning settings~\citep{kirichenko2022last,mehta2022embedding,lee2022surgical}. The challenge of hyperparameter tuning for a target domain without additional domain-specific information remains an open problem that we hope can be addressed in future work.
As a demonstration of the computational efficiency of \ours{}, after caching pre-trained embeddings, we can collectively run all experiments in \cref{subsec:prior-methods}, which is nearly 30k runs due to hyperparameter tuning, within \textit{$24$ hours} using four standard CPUs and \textit{no GPUs}.
We find that \ours is robust to learning rate, which is expected as the optimization problem is linear.

\subsection{Comparison to prior projection methods}
\label{subsec:prior-methods}

\begin{figure*}[t!]
\centering
\vspace{-8pt}
\includegraphics[width=1.0\textwidth]{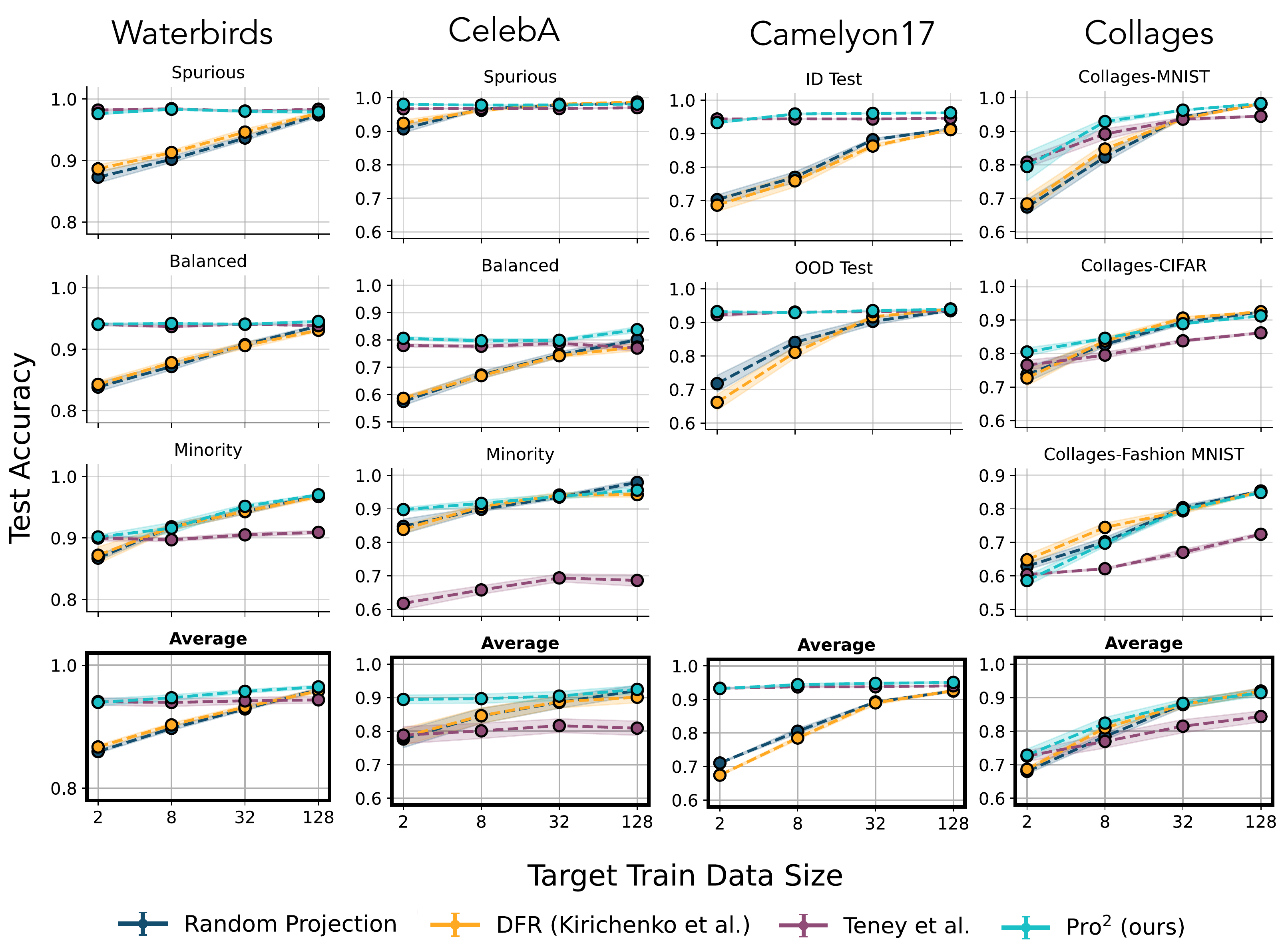}
\vspace{-15pt}
\caption{ \small
    \textbf{Main results.} We compare $4$ different methods for learning features to adapt to a target distribution: (1) Random Projection, (2) DFR \cite{kirichenko2022last}, (3) \citet{teney2021evading}, and (4) \ours.
    We report average target accuracies after probing with different target dataset sizes ranging from 2 to 128 datapoints per label; error bars indicate standard error across $10$ runs.
    \ours is the best performing or tied for best performing method \textit{across each of these 4 datasets with any amount of target data}. \ours substantially outperforms Random Projection and DFR in the low-data regime on all four datasets. \ours also outperforms \citet{teney2021evading} on average on 3 of the 4 datasets particularly when given more target data. 
}
\label{fig:main-exps}
\vspace{-10pt}
\end{figure*}

\begin{table*}[t!]
\centering
\renewcommand\arraystretch{1.0}
\resizebox{0.9\textwidth}{!}{
\begin{tabular}{c|ccc|ccc}
\toprule
\multicolumn{1}{l|}{}              & \multicolumn{3}{c|}{Living17}                 & \multicolumn{3}{c}{FMoW}             \\ \cmidrule{2-7} 
Target Train Data Size (per label) & 1          & 2          & 5                   & 1          & 2          & 5          \\ \midrule
Random Projection                  & 85.7 (0.6) & 92.7 (1.0) & \textbf{99.2 (0.1)} & 16.3 (0.7) & 23.6 (0.6) & 33.3 (0.6) \\
DFR (Kirichenko et al.)            & 87.1 (0.9) & 95.0 (0.9) & 98.8 (0.3)          & 17.5 (0.8) & 24.0 (0.6) & 35.1 (0.6) \\
\ours & \textbf{91.2 (0.4)} & \textbf{95.7 (0.7)} & \textbf{99.2 (0.06)} & \textbf{20.2 (0.9)} & \textbf{28.7 (1.1)} & \textbf{37.2 (0.8)} \\
\bottomrule
\end{tabular}
}
\vspace{2mm}
\caption{ \small \textbf{Additional main results.} We run additional experiments on the Living17 dataset from the Breeds benchmark~\citep{santurkar2020breeds} and FMoW~\citep{koh2021wilds}, reporting adaptation accuracy and standard error across 10 runs. Both of these datasets are challenging multi-class distribution shift tasks and are representative of real-world scenarios. We find that similar to the other datasets, \ours is the best performing or tied for best performing method on these datasets when given a limited amount of target data.} 
\label{tab:more-results}
\end{table*}

We investigate whether \ours{} can extract features that can facilitate adaptation to different distribution shifts, and how it compares other feature extraction methods.
We perform a comprehensive experimental evaluation on the six datasets, comparing \ours against four other projection methods: (1) Random Projection, (2) DFR~\cite{kirichenko2022last}, which uses standard linear probing, and (3) \citet{teney2021evading}, which aims to learn multiple predictive patterns by minimizing the alignment of input gradients over pairs of features.
Experiments in \cref{fig:main-exps} and \cref{tab:more-results} indicate that across all different target distributions six datasets, \ours significantly outperforms Random Projection and DFR, especially in the low-data regime.
In particular, these results show that DFR or standard linear probing, the strategy adopted by several additional prior works by default~\cite{mehta2022embedding,izmailov2022feature}, is not the most data-efficient way to utilize pre-trained embeddings when given limited target data. 
This is because such embeddings contain redundant or non-predictive information, and including these features during adaptation leads to higher variance without decreasing bias, which in turn means that we need more labeled samples.
In contrast, \ours improves sample efficiency by first extracting a predictive feature-space basis from the source distribution, removing redundant information.
\citet{teney2021evading} is sufficient in some scenarios with milder distribution shift, where a diverse range of features are not needed for adaptation. %
However, it fails to achieve high accuracy given a large target dataset on more severe distribution shifts, such as the Minority distributions on Waterbirds and CelebA or the Fashion-MNIST and CIFAR distributions in 4-Way Collages.
This indicates that the feature diversity from the orthogonality constraint gives \ours{} better coverage of different features, enabling better adaptation to severe distribution shifts given enough target data. 
These results demonstrate the effectiveness of \ours compared to existing methods in the few-shot adaptation problem setting.

\subsection{Projection dimension and shift severity}

\begin{figure*}[t!]
\centering
\includegraphics[width=1.0\textwidth]{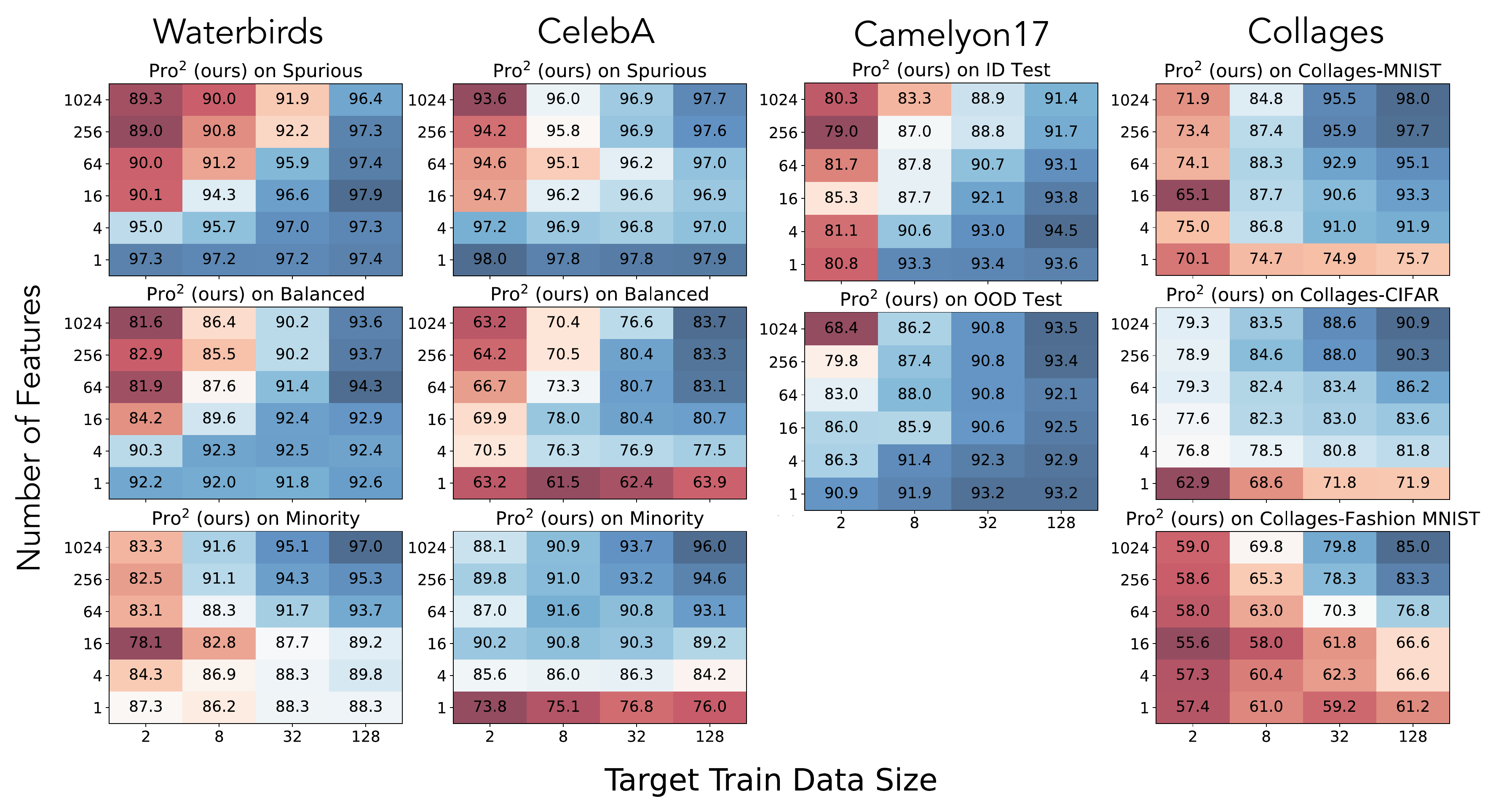}
\caption{ \small
\textbf{Feature-space dimensionality of \ours and severity of distribution shift.}
    We vary the feature-space dimensions $d$ (y-axis) of \ours{} and report held-out accuracy after training on target datasets of different size (x-axis) on our 4 datasets.
    Higher accuracies are in blue and lower accuracies are in red.
    We see that smaller feature-space dimensions suffice for target distributions with milder distribution shifts while higher dimensions are required for more severe shifts. For example, on the spurious test distribution (small dist. shift) of Waterbirds/CelebA, the bottom row, which uses $d=1$ is bluest, while the blue is concentrated in the top right squares (which use more features and more data) for more difficult distribution shifts such as Minority for Waterbirds/CelebA and the collages test sets.
}
\label{fig:proj-dim-exps}
\vspace{-10pt}
\end{figure*}

In this subsection, we investigate how the feature-space dimension $d$ affects the sample efficiency of \ours{}, for different degrees of distribution shift. 
Experiments in \cref{fig:proj-dim-exps} show that when the distribution shift is less severe, such as the Spurious test distributions on Waterbirds and CelebA, it is helpful to reduce the number of features used. 
This scenario is analogous to the ID setting in~\cref{fig:analysis_hog}.
In such scenarios, the top-ranked features from the source data are also predictive on the target distribution, and incorporating additional features worsens generalization because it increases variance without sufficiently decreasing bias.
However, when the distribution shift is more severe, such as the Minority distributions on Waterbirds and CelebA or Collages-Fashion MNIST and Collages-CIFAR, it is helpful to increase the number of features used. 
This scenario is analogous to the Far OOD setting in~\cref{fig:analysis_hog}.
These empirical results are supported formally by our theoretical results in~\cref{sec:analysis}, which show that the optimal number of features to use increases with distribution shift severity.

\section{Conclusion}
In this paper, we propose \ours, a lightweight framework consisting of 2 steps: (1) a projection step that extracts a diverse and predictive feature-space basis and (2) 
a probing step that interpolates between the projected features to efficiently adapt varying target distributions. 
Our theoretical and empirical analyses reveal a number of interesting novel insights: (i) standard linear probing is not the best approach for few-shot adaptation; (ii) Retaining a diverse range of potentially useful features that different target distributions may require improves sample efficiency, (iii) we can trade off how much to adapt (size of the feature-space basis) vs number of samples, picking the best basis to adapt for each level of shift.
These insights open up a range of exciting paths for future work. First, our framework may be extended to other problem settings, such as the active learning setting, in which the model can adaptively request target labels. Another interesting direction is developing methods to better determine the optimal number of features and best feature basis to use when adapting. 
Integrating \ours with other fine-tuning methods is also a promising direction for further improving adaptation performance. Finally, another interesting direction would be selecting which features to use in an unsupervised fashion, without any labeled target data.

\section*{Acknowledgments}
We thank members of the IRIS and RAIL labs for helpful discussions on this project.
This work was supported by NSF, KFAS, Apple, Juniper, and ONR grant N00014-20-1-2675.

\bibliographystyle{apalike}
\bibliography{master}

\newpage 
\appendix 
\onecolumn
\section{Proofs for Theoretical Analysis}
\label{sec:app:theory}

We present proofs for our theoretical analysis in~\cref{sec:analysis} along with some additional statements. 
As in the main paper, we denote the dimensionality of the feature-space basis learned by \ours as $d$, 
the original dimension of the representations given by the feature backbone $f$ as $D$, 
source and target distributions as $p_S$ and $p_T$,
and the number of source and target datapoints as $N$ and $M$. 
We let $\pmi_d$ denote the projection matrix for $\mathrm{span}(\{\Pi_i\}_{i=1}^d)$, \ie $\pmi_d$$=$$[\Pi_1, .., \Pi_d][\Pi_1, .., \Pi_d]^\top$. If the target error for the feature $w$ is $\gL_T(\rvw) := \E_{\gD_T} l(\langle \rvw, \rvx \rangle, \;\ry)  $, then the bias incurred by probing on the subspace $\pmi_d$ consisting of source features is: 
\begin{align*}
    b_d \; := \; \min_{\rvw' \in \mathrm{span}(\{\Pi_i\}_{i=1}^d)} \gL_T (\rvw')  \; -\; \min_{\rvw \in \gW} \gL_T (\rvw),
\end{align*}
and we denote the feature-space basis of dimensionality $d$ learned by \ours as follows:
\begin{align}
\label{eqn:d_features}
    \hat{\rvw}_d := \argmin_{\rvw \in \mathrm{span}(\{\Pi_i\}_{i=1}^d)} \sum_{i=1}^{M} l(\langle \rvw, \rvx^{(i)} \rangle,\; \ry^{(i)}).
 \end{align}

From the original $D$-dimensional feature representations given by our feature backbone $f$, we want our learned linear projections $\Pi: \Real^D \rightarrow \Real^d$ to retain as much information as possible that is relevant in predicting the label $\ry$. In other words, we want to maximize the mutual information between the projected features $\Pi(\rvx)$ and the labels $\ry$.
In Theorem~\ref{thm:opt-info}, We first formally characterize the solution found by the projection step in \ours{} as maximizing this mutual information amongst all rank $d$ matrices with orthogonal columns, using the following two lemmas.

\begin{lemma}[Entropy of a sub-gaussian under low rank projection] For a $\sigma-$sub-gaussian random variable $\rvx$, and rank $r$ orthonormal matrix $\mA \in \Real^{d \times D}$ the differential entropy $H(\mA \rvx)$ for projection $\mA \rvx$ is $\gO(1/d)$ when $\log \sigma = \gO(1/d^2)$.  
\label{lem:ent-sub-gauss}
\end{lemma}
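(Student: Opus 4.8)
The plan is to upper-bound $H(\mA\rvx)$ via the maximum-entropy characterization of the Gaussian, after checking that the low-dimensional image $\mA\rvx$ inherits the sub-gaussian parameter of $\rvx$ and therefore has a covariance of controlled size. I read the hypothesis as $\mA$ having orthonormal rows (so its rank $r$ equals $d$ and $\mA\mA^\top = \rmI_d$), which is exactly the regime produced by the Project step.

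First I would record that an orthonormal projection preserves sub-gaussianity with the same parameter. Since $\mA\mA^\top = \rmI_d$, for any unit $v \in \R^d$ we have $\langle v, \mA\rvx\rangle = \langle \mA^\top v, \rvx\rangle$ with $\|\mA^\top v\|_2 = \|v\|_2 = 1$. Thus every one-dimensional marginal of $\mA\rvx$ coincides with a marginal of $\rvx$ along a unit direction, so $\mA\rvx$ is again $\sigma$-sub-gaussian on $\R^d$. In particular each directional variance, and hence $\opnorm{\Cov(\mA\rvx)}$, is bounded by $C\sigma^2$ for an absolute constant $C$.

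Next I would invoke the fact that, among all $d$-dimensional random vectors with a fixed covariance, the Gaussian maximizes differential entropy, which gives $H(\mA\rvx) \le \tfrac12 \log\!\big((2\pi e)^d \det \Cov(\mA\rvx)\big)$. Bounding $\det \Cov(\mA\rvx) \le \opnorm{\Cov(\mA\rvx)}^d \le (C\sigma^2)^d$ collapses this to $H(\mA\rvx) \le \tfrac{d}{2}\log(2\pi e C \sigma^2) = d\log\sigma + \tfrac{d}{2}\log(2\pi e C)$. The term carrying the dependence on the tail parameter is $d\log\sigma$, so substituting the hypothesis $\log\sigma = \gO(1/d^2)$ yields $d\cdot\gO(1/d^2) = \gO(1/d)$, as claimed.

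The step I expect to be the main obstacle is not the estimation chain itself, which is standard, but the bookkeeping of the dimension-linear additive constant $\tfrac{d}{2}\log(2\pi e C)$: the clean $\gO(1/d)$ statement tracks only the $\sigma$-dependent contribution $d\log\sigma$ and folds this offset into the normalization of the representation (equivalently, entropy measured relative to an isotropic reference of the appropriate scale, so that $2\pi e C\sigma^2$ is of order one). A secondary point worth stating explicitly is that the estimate is one-sided, since differential entropy can diverge to $-\infty$ for a degenerate image; I would note that the regime under consideration (with $\log\sigma$ near a constant) keeps $\Cov(\mA\rvx)$ nondegenerate, so the displayed upper bound is the operative estimate and the four steps above assemble directly into the claim.
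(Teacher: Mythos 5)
Your proof is correct and reaches the same bound as the paper, but by a different decomposition. The paper's proof is coordinate-wise: it applies the entropy chain rule together with ``conditioning reduces entropy'' to get $H(\mA\rvx) \le \sum_{i=1}^{d} H(\mA_i^\top \rvx)$, then bounds each one-dimensional entropy by the scalar sub-gaussian maximum-entropy estimate $\tfrac12\log(2\pi e\sigma^2)$ and sums $d$ terms of size $\gO(1/d^2)$. You instead make a single multivariate argument: orthonormal rows preserve the sub-gaussian parameter (via $\langle v,\mA\rvx\rangle = \langle \mA^\top v,\rvx\rangle$ with $\|\mA^\top v\|_2=1$), so $\opnorm{\Cov(\mA\rvx)} \lsim \sigma^2$, and then the $d$-dimensional Gaussian maximum-entropy bound with $\det\Cov(\mA\rvx) \le \opnorm{\Cov(\mA\rvx)}^d$ gives $H(\mA\rvx) \le \tfrac{d}{2}\log(2\pi e C\sigma^2)$. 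The two routes are closely linked --- by Hadamard's inequality your joint bound is in fact never worse than the paper's sum of marginal bounds --- but the paper's chain-rule decomposition is the one that recurs in the downstream argument (the conditional mutual-information manipulations in Theorem~\ref{thm:opt-info}), whereas yours is more self-contained. Your explicit flagging of the dimension-linear additive constant $\tfrac{d}{2}\log(2\pi e C)$ is apt: the paper's own proof has exactly the same issue, since $\tfrac12\log(2\pi e\sigma^2) = \tfrac12\log(2\pi e) + \log\sigma$ and the step ``$H(\mA_i^\top\rvx) \lsim 1/d^2$'' silently drops the positive constant $\tfrac12\log(2\pi e)$ (the notation $\lsim$ absorbs multiplicative, not additive, constants); so the lemma must be read, in both proofs, as tracking only the $\sigma$-dependent contribution, or equivalently as assuming $\log(2\pi e\sigma^2) = \gO(1/d^2)$. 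Your proposal is therefore no weaker than the paper's proof, and its one-sidedness caveat (differential entropy can be $-\infty$, so only the upper bound is claimed) is consistent with how the lemma is used, namely solely as an upper bound in \eqref{eq:upper-bd}.
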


\begin{proof}
Let $\mA_i$ denote the $i^{th}$ row of $\mA$, then:
\begin{align*}
    H(\mA \rvx) &= H([\mA_{1}^\top \rvx, \ldots, \mA_{r}^\top \rvx]^\top) = H(\mA_1^\top \rvx) +  \sum_{i=2}^{i=d} H(\mA_i^\top \rvx \mid \mA_{1}^\top \rvx, \ldots, \mA_{i-1}^\top \rvx ) \leq   \sum_{i=1}^{i=d} H(\mA_i^\top \rvx)     
\end{align*}
Since, $\rvx$ is $\sigma-$sub-gaussian, using standard bounds on differential entropy and the inequality above we can argue that: 
\begin{align*}
    & \quad \quad\;\; \E \brck{e^{t^2\mA_i^\top \rvx }} \leq e^{t^2\sigma^2/2}, \;\; \forall i, t \\
    & \implies H(\mA_i^\top \rvx) \leq \frac{1}{2} \log(2\pi  e \sigma^2) \;\; \forall i  \\
    & \implies H(\mA_i^\top \rvx) \lsim (1/d^2) \;\; \forall i  \;\; (\textrm{since } \log \sigma = \gO(1/d^2))  \\
    & \implies  H(\mA \rvx) = \gO(1/d)
\end{align*}
\end{proof}

\begin{lemma}[Entropy for a mixture of $\sigma-$sub-gaussians] For a $d-$dimensional random variable $\rvx$ that is a mixture of two $\sigma-$sub-gaussian random variables $\rvx_1$ and $\rvx_2$ with overlapping supports, bounded Jensen-Shannon divergence $\mathrm{JS}(\vv^\top \rvx_1 || \vv^\top \rvx_2) \leq \beta$ and mixture proportion $\alpha \in [0, 1]$ \ie the density function $p(\rvx) = \alpha \cdot p_1(\rvx) + (1-\alpha) \cdot p_2(\rvx)$, then the entropy $H(\vv^\top \rvx)$ for $\|\vv\|_2 = 1$, is at most $\gO(\log \sigma + \beta)$.  
\label{lem:ent-mixture-sub-gauss}
\end{lemma}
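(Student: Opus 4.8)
The plan is to introduce the mixture component as a latent random variable and decompose the projected entropy through it. Write $z \triangleq \vv^\top \rvx$ with $\|\vv\|_2 = 1$, and let $C \in \{1,2\}$ be the latent indicator of which component generated $\rvx$, so that $P(C=1) = \alpha$, $P(C=2) = 1-\alpha$, and $z \mid \{C=i\}$ is distributed as $\vv^\top \rvx_i$. The key identity is the decomposition $H(z) = H(z \mid C) + I(z; C)$, which converts the entropy of the (hard-to-handle) mixture into a convex combination of component entropies plus a mutual information term that I will control using the Jensen--Shannon divergence hypothesis.

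First I would handle the conditional entropy. By definition, $H(z \mid C) = \alpha\, H(\vv^\top \rvx_1) + (1-\alpha)\, H(\vv^\top \rvx_2)$. Each projection $\vv^\top \rvx_i$ is a one-dimensional $\sigma$-sub-gaussian random variable, since a unit-norm linear image of a $\sigma$-sub-gaussian vector is again $\sigma$-sub-gaussian; hence its variance is at most $\sigma^2$, and by the Gaussian max-entropy bound already invoked in Lemma~\ref{lem:ent-sub-gauss} we get $H(\vv^\top \rvx_i) \leq \tfrac12 \log(2\pi e \sigma^2) = \gO(\log \sigma)$. Since the mixing weights sum to one, this yields $H(z \mid C) = \gO(\log \sigma)$ uniformly in $\alpha$.

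The remaining step is to bound $I(z; C)$ by the Jensen--Shannon divergence. I would invoke the standard identity that the mutual information between an observation and its generating mixture label equals the mixing-weight-generalized Jensen--Shannon divergence of the components, namely $I(z;C) = \alpha\, \mathrm{KL}(p_{1,z} \,\|\, p_z) + (1-\alpha)\, \mathrm{KL}(p_{2,z} \,\|\, p_z) = \mathrm{JS}(\vv^\top \rvx_1 \,\|\, \vv^\top \rvx_2)$, where $p_{i,z}$ and $p_z$ denote the densities of $\vv^\top \rvx_i$ and of $z$. The overlapping-supports hypothesis is precisely what guarantees that these KL terms, and therefore the JS divergence, are finite and that the identity is valid. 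By assumption this quantity is at most $\beta$, so $I(z;C) \le \beta$. Feeding both bounds into the decomposition gives $H(\vv^\top \rvx) = H(z\mid C) + I(z;C) \le \gO(\log\sigma) + \beta = \gO(\log\sigma + \beta)$, as claimed.

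The main obstacle is the middle identity $I(z;C) = \mathrm{JS}(\cdot\,\|\,\cdot)$: I must pin down the convention so that the $(\alpha, 1-\alpha)$-weighted Jensen--Shannon divergence arising from $I(z;C)$ matches the $\mathrm{JS}$ quantity that the statement bounds by $\beta$. If the statement intends the symmetric $(\tfrac12,\tfrac12)$ version, I would either restate the bound with the $\alpha$-weighted divergence, or simply observe that $I(z;C) \le H(C) \le \log 2$ already supplies an unconditional constant bound that can be absorbed into the $\gO(\cdot)$. A secondary point to verify is that the sub-gaussian-to-entropy step requires only an upper bound on the variance rather than centering, which the sub-gaussian moment generating inequality provides directly.
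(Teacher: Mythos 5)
Your proof is correct, but it takes a genuinely different route from the paper's. The paper expands the mixture entropy directly: it applies Jensen's inequality (concavity of $\log$) to $-\log(\alpha p_1 + (1-\alpha)p_2)$, multiplies out to obtain weighted component entropies plus the two cross-entropy terms $-\alpha(1-\alpha)\int p_1 \log p_2$ and $-\alpha(1-\alpha)\int p_2 \log p_1$, rewrites those as entropies plus $\mathrm{KL}(p_1\|p_2)$ and $\mathrm{KL}(p_2\|p_1)$, and bounds the divergence contribution by $2\alpha(1-\alpha)\beta$, landing at $\alpha H(\vv^\top\rvx_1) + (1-\alpha)H(\vv^\top\rvx_2) + 2\alpha(1-\alpha)\beta \lsim \log\sigma + \beta$. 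You instead introduce the latent component label $C$ and use the chain rule $H(\vv^\top\rvx) = H(\vv^\top\rvx \mid C) + I(\vv^\top\rvx; C)$, bounding the conditional term by sub-gaussian max-entropy exactly as in Lemma~\ref{lem:ent-sub-gauss} and identifying the information term with the $(\alpha,1-\alpha)$-weighted Jensen--Shannon divergence. Your route is arguably the cleaner of the two on the divergence side: the mutual-information term \emph{is} the generalized JSD, which is precisely what the hypothesis bounds, and is moreover unconditionally at most $H(C) \le \log 2$, so you get an $\gO(1)$ fallback that makes the overlapping-supports caveat essentially unnecessary. The paper's route, by contrast, needs the symmetrized KL of the projected components to be controlled by $\beta$; a bound on the (always finite, $\le \log 2$) Jensen--Shannon divergence does not in general imply a bound on KL, so the paper's final inequality implicitly reads the hypothesis as a symmetrized-KL bound and relies on the overlapping-supports assumption to keep those KL terms finite. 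Your flagged concern about the $(\alpha,1-\alpha)$ versus $(\tfrac12,\tfrac12)$ weighting convention is the right thing to pin down, and your resolution (restate with the weighted divergence, or absorb $\log 2$ into the $\gO(\cdot)$) is sound; your observation that the sub-gaussian-to-entropy step needs only a variance upper bound is also correct and matches how Lemma~\ref{lem:ent-sub-gauss} is used in the paper.
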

\begin{proof}
Using Jensen inequality and the definition of KL divergence, the differential entropy can be broken down as follows: 
\begin{align*}
    H(\vv^\top \rvx) &= -\int (\alpha \cdot p_1(\vv^\top\vx) + (1-\alpha) \cdot p_2(\vv^\top\vx)) \; \log (\alpha \cdot p_1(\vv^\top\vx) + (1-\alpha) \cdot p_2(\vv^\top\vx)) \; d\vx  \\
    &\leq \alpha^2 H(\vv^\top \rvx_1) + (1-\alpha)^2 H(\vv^\top \rvx_2) \\ 
    & \quad - \alpha (1-\alpha) \int p_1\vv^\top\vx) \log p_2(\vv^\top\vx) \; d\vx - \alpha (1-\alpha) \int p_2(\vv^\top\vx) \log p_1(\vv^\top\vx) \; d\vx \\
    &\leq \alpha  H(\vv^\top \rvx_1) + (1-\alpha) H(\vv^\top \rvx_2) + 2\alpha (1-\alpha) \beta \\
    &\lsim \log (\sigma) + \beta 
\end{align*}
where the last step follows from the first two arguments made in the proof for Lemma~\ref{lem:ent-sub-gauss}.
\end{proof}

\begin{theorem}[Information in projected input]
\label{thm:opt-info}
When the distributions $p(\rvx \mid \ry)$ are $\exp{(d^{-2})}-$sub-gaussian for each $y$ and the Jensen-Shannon divergence $\mathrm{JS}(p(\vv^\top \rvx \mid y = 0) \; || \; p(\vv^\top \rvx \mid y = 1)) = \gO(1/d)$.
the solution $\{\Pi_{i}\}_{i=1}^{d}$ returned by \ours maximizes a tight lower bound (difference bounded by an $O(1)$ constant) on the mutual information criterion $I(\mathbf{A}\rvx; \; \ry)$ among all $d \times D$ row-orthonormal matrices $\mathbf{A}$. (See end of the proof for \textbf{discussion on assumptions} and \textbf{remark on tightness}).
\end{theorem}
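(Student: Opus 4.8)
The plan is to rewrite the criterion as a difference of differential entropies, $I(\mathbf{A}\rvx;\ry) = H(\mathbf{A}\rvx) - H(\mathbf{A}\rvx \mid \ry)$, and to connect its per-coordinate structure to the greedy objective solved by \ours{}. The key observation is that each step of \ours{} selects a row $\Pi_i$ (orthogonal to $\Pi_1,\dots,\Pi_{i-1}$) that minimizes the source classification loss; for a proper loss the Bayes-optimal risk along a direction $\Pi_i$ equals $H(\ry) - I(\Pi_i^\top\rvx;\ry)$, so minimizing the loss is equivalent to maximizing the single-direction mutual information $I(\Pi_i^\top\rvx;\ry)$. I would therefore introduce the \emph{separable surrogate} $\underline{I}(\mathbf{A}) := \sum_{i=1}^{d} I(\mathbf{A}_i^\top\rvx;\ry)$, which is exactly the sum of the per-row objectives \ours{} greedily optimizes, and then show (i) that $\underline{I}$ agrees with $I(\mathbf{A}\rvx;\ry)$ up to an $\gO(1)$ additive gap, so that $\underline I$ (shifted by a constant) is a tight lower bound, and (ii) that \ours{} exactly maximizes $\underline I$ over all row-orthonormal $\mathbf{A}$.

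\textbf{Tightness of the surrogate.} Expanding both quantities into entropies gives
\[
\underline{I}(\mathbf{A}) - I(\mathbf{A}\rvx;\ry) = \mathrm{TC}(\mathbf{A}\rvx) - \mathrm{TC}(\mathbf{A}\rvx \mid \ry),
\]
where $\mathrm{TC}$ denotes the nonnegative (conditional) total-correlation subadditivity gap $\sum_i H(\mathbf{A}_i^\top \cdot) - H(\mathbf{A}\,\cdot)$. Both gaps are controlled by the two entropy lemmas. For the conditional term, given $\ry = y$ the variable $\rvx$ is a single $\sigma$-sub-gaussian with $\log\sigma = \gO(1/d^2)$, so Lemma~\ref{lem:ent-sub-gauss} and its chain-rule step bound $H(\mathbf{A}\rvx \mid y)$ and each $H(\mathbf{A}_i^\top\rvx \mid y)$ by $\gO(1/d)$, forcing $\mathrm{TC}(\mathbf{A}\rvx \mid \ry) = \gO(1)$. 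For the marginal term, each $\mathbf{A}_i^\top\rvx$ is a one-dimensional mixture of the two projected class-conditionals, so the assumption $\mathrm{JS} = \gO(1/d)$ lets Lemma~\ref{lem:ent-mixture-sub-gauss} bound $H(\mathbf{A}_i^\top\rvx) = \gO(\log\sigma + \beta) = \gO(1/d)$; summing over the $d$ rows and using subadditivity on $H(\mathbf{A}\rvx)$ keeps $\mathrm{TC}(\mathbf{A}\rvx) = \gO(1)$. Since each of the $d$ rows contributes only $\gO(1/d)$, the total gap telescopes to $\gO(1)$ uniformly in $\mathbf{A}$ — which is precisely where the scalings $\log\sigma = \gO(1/d^2)$ and $\mathrm{JS} = \gO(1/d)$ enter, and is the content of the remark on tightness.

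\textbf{Optimality of \ours{} and the main obstacle.} It remains to show that the sequentially orthogonalized frame returned by \ours{} globally maximizes the separable surrogate $\underline{I}(\mathbf{A}) = \sum_i I(\mathbf{A}_i^\top\rvx;\ry)$ over all row-orthonormal $\mathbf{A}$. Because the objective is a sum of identical per-direction functionals evaluated on an orthonormal set, I would establish greedy global optimality via a Ky-Fan / Courant--Fischer exchange argument: when the single-direction information $I(\bm{v}^\top\rvx;\ry)$ is a monotone function of the signal-to-noise ratio $(\bm{v}^\top(\mu_1-\mu_0))^2 / (\bm{v}^\top\Sigma\bm{v})$, a generalized Rayleigh quotient, maximizing the sum over an orthonormal frame reduces to a top-$d$ generalized-eigenvalue problem whose optimum is recovered by greedy orthogonal selection. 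I expect this last step to be the principal obstacle: the monotone-Rayleigh-quotient form is exact only in the Gaussian case, whereas the theorem is stated for general sub-gaussians, so the hard part is showing that the sub-gaussian concentration pins down $I(\bm{v}^\top\rvx;\ry)$ tightly enough that greedy selection still coincides with the global maximizer of $\underline I$ rather than merely a near-maximizer. Verifying this reduction, together with the proper-loss-to-mutual-information identity assumed in the first step, constitutes the core of the argument and is the natural place to record the discussion on assumptions.
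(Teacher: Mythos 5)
Your first half retraces the paper's route: the paper also reduces $I(\mA\rvx;\ry)$ to the separable surrogate $\sum_{i=1}^{d} I(\mA_i^\top\rvx;\ry)$, though it does so by induction on $d$ with the decomposition $I([\mA'\rvx,\vv^\top\rvx]^\top;\ry) = I(\mA'\rvx;\ry) + I(\vv^\top\rvx;\ry) - \bigl( I(\vv^\top\rvx;\mA'\rvx) - I(\vv^\top\rvx;\mA'\rvx\mid\ry) \bigr)$, bounding the per-step correction by $\gO(1/d)$ via Lemmas~\ref{lem:ent-sub-gauss} and~\ref{lem:ent-mixture-sub-gauss} and summing to $\gO(1)$ --- exactly the telescoping your total-correlation identity expresses. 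One caution on your tightness claim, however: you assert the surrogate \emph{agrees} with $I(\mA\rvx;\ry)$ up to $\gO(1)$ in both directions, but upper-bounding $\mathrm{TC}(\mA\rvx)$ or $\mathrm{TC}(\mA\rvx\mid\ry)$ requires \emph{lower} bounds on the joint differential entropies $H(\mA\rvx)$ and $H(\mA\rvx\mid\ry)$, and sub-gaussianity supplies only upper bounds (differential entropy is unbounded below, e.g.\ for near-degenerate conditionals). The paper accordingly proves only the one-sided inequality $\max_\mA I(\mA\rvx;\ry) \geq \max_\mA \sum_i I(\mA_i^\top\rvx;\ry) - \gO(1)$, which is all the theorem statement needs, and reserves the matching direction for log-concave conditionals in its ``remark on tightness.'' Your per-direction identity between proper-loss minimization and single-direction mutual information maximization matches the paper's step (it cites \citet{petridis2004relation} for this).

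The genuine gap is in your optimality step, and you have correctly located it yourself: your Ky-Fan/Courant--Fischer reduction requires $I(\vv^\top\rvx;\ry)$ to be monotone in a generalized Rayleigh quotient, which holds in the homoscedastic Gaussian case but not in the sub-gaussian generality of the theorem, and you leave unresolved whether greedy selection still attains the global maximum of $\underline{I}$. The paper closes this without any spectral structure, via a purely combinatorial exchange-and-recurse argument: if $\mA^*$ maximizes $\sum_i I(\mA_i^\top\rvx;\ry)$ over row-orthonormal matrices and $\vv^* = \argmax_{\|\vv\|_2=1} I(\vv^\top\rvx;\ry)$ were not among its rows, one projects the rows $\mA^*_i$ onto the orthogonal complement of $\vv^*$, keeps the $d-1$ directions with largest per-direction information, and appends $\vv^*$, strictly improving the separable objective --- a contradiction; so $\vv^*$ is a row of $\mA^*$, and recursing in $\mathrm{NullSpace}(\vv^*)$ shows the sequence $\vv_1^\star, \vv_2^\star, \ldots$ of greedy maximizers coincides with the iterates of the Project step of \ours{}. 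This exchange argument is the missing idea in your proposal: it replaces the distribution-dependent Rayleigh-quotient monotonicity with an argument that exploits only the separability of the surrogate and orthonormality of the frame, which is precisely why the theorem can be stated for general sub-gaussian class-conditionals rather than Gaussians. Without it (or some substitute), your proof establishes only that \ours{} greedily ascends $\underline{I}$, not that it maximizes it.
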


\begin{proof}
We use an inductive argument on $d$. Consider the following maximization problem where $\mathbb{B}_d$ is the set of all row orthonormal matrices of row rank $d \ll D$: 
\begin{align}
\max _{\mA \in \mathbb{B}_d} I(\mA \rvx ; \ry).
\end{align}
Let $d>1$. Then, we can re-write the above as:
\begin{align}
\max _{\mA \in \mathbb{B}^{d \times D}} I(\mA \rvx ; \ry) = \max _{\mA^{\prime} \in \mathbb{B}_{d-1}, \vv \in \Real^D} I\left(\brck{\mA^{\prime} \rvx, \vv^{\top} \rvx}^\top ; \ry\right) \quad \text{where, } \quad \vv \in \mathrm{NullSpace}(\mA^\prime), \left\| \vv \right\|_2 = 1.
\end{align}

Now, we can decompose this expression using the conditional mutual information identities:
\begin{align}
    I\left(\brck{\mA^{\prime} \rvx, \vv^{\top} \rvx}^\top ; \ry\right)
    &= I(\mA^{\prime} x ; \ry)+I(\vv^\top \rvx ; \ry) - I(\vv^\top \rvx ; \mA'\rvx) + I(\vv^\top \rvx; \mA'\rvx \mid y) \nonumber \\
    &= I(\mA^{\prime} x ; \ry)+I(\vv^\top \rvx ; \ry) - \left( I(\vv^\top \rvx ; \mA'\rvx) - I(\vv^\top \rvx; \mA'\rvx \mid y) \right) \label{eq:mi-decomp}
\end{align}

Now, we upper bound the drop in information when we condition on $\ry$: $( I(\vv^\top \rvx ; \mA'\rvx) - I(\vv^\top \rvx; \mA'\rvx \mid y))$ using Lemma~\ref{lem:ent-sub-gauss} and Lemma~\ref{lem:ent-mixture-sub-gauss}.
\begin{align}
  I(\vv^\top \rvx ; \mA'\rvx) - I(\vv^\top \rvx; \mA'\rvx \mid y)& = H([\vv^\top \rvx ; \mA'\rvx]^\top \mid \ry) - H(\vv^\top \rvx \mid \ry) - H( \mA'\rvx \mid \ry) +  I(\vv^\top \rvx ; \mA'\rvx)   \nonumber \\ 
  &\leq H([\vv^\top \rvx ; \mA'\rvx]^\top \mid \ry) + H(\vv^\top \rvx) = H(\mA\rvx \mid \ry) + H(\vv^\top \rvx) \nonumber \\
  &\lsim \log \sigma + \beta = \gO(1/d), \label{eq:upper-bd} 
\end{align}
where the last statement applies Lemma~\ref{lem:ent-sub-gauss} to bound $H(\mA\rvx \mid \ry)$ (since $\mA\rvx \mid y = 0$ and $\mA\rvx \mid y=1$ are $\sigma-$sub-gaussian) and Lemma~\ref{lem:ent-mixture-sub-gauss} to bound the entropy on the mixture of sub-gaussians $H(\vv^\top \rvx)$. Also, note that the conditional distributions differ in Jensen-Shannon divergence by $\gO(1/d)$ and the sub-gaussian assumption gives us $\log \sigma = \gO(1/d^2)$. 

Using ~\eqref{eq:mi-decomp},~\eqref{eq:upper-bd} we have:
\begin{equation}
\label{eqn:star}
    \max _{\mA \in \mathbb{B}_d} I(\mA \rvx ; \ry) \quad \geq \quad \max _{\substack{\mA^{\prime} \in \mathbb{B}_{d-1}, \\ \vv \in \mathrm{NullSpace(\mA)}, \|\vv\|_2=1}} I\left(\mA^{\prime} \rvx ; y\right) + I(\vv^{\top} \rvx ; \ry) - \gO(1/d).
\end{equation}

Let $\mA_i$ denote the $i^{th}$ row of $\mA$, then. Then, applying the above inductively for all $d$:
\begin{equation}
\label{eqn:star-2}
    \max _{\mA \in \mathbb{B}_d} I(\mA \rvx ; \ry) \quad \geq \quad \max_{\mA \in \mathbb{B}_{d}} \left(\sum_{i=1}^{i=d} I(\mA_i^\top \rvx ; y) \right) - \gO(1) 
\end{equation}

Let $\mA^*$ be the solution of the right hand side and $\vv^* = \arg \max_{\vv:\|\vv\|_2=1} I(\vv^\top \rvx; \ry)$. Next, we note that $\exists i$ such that $\mA^*_i = \vv^*$. It is easy to prove this by contradiction. Consider the case where $\nexists i$ such that $\mA^*_i = \vv^*$. Then, we can construct a solution $\{(\mathbf{I}_D - \vv^*{\vv^*}^\top)\mA^*_i\}_{i=1}^{d}$, order them by mutual information $I((\mA^*_i)^\top(\mathbf{I}_D - \vv^*{\vv^*}^\top)\rvx; \ry)$, take the top $d-1$ entries and append to this set $\vv^*$. The new solution would have a higher value of the objective on the right hand side of \eqref{eqn:star-2}, since the new solution retains optimal directions perpendicular to $\vv^*$ while adding $\vv^*$ to the set. Thus, we arrive at a contradiction and it is clear that $\vv^*$ belongs to the solution $\mA^*$ for the objective on the right side of \eqref{eqn:star-2}.

Knowing that $\vv^*$ has to be part of $\mA*$, we can now write the right side of \eqref{eqn:star-2} as the following: 
\begin{align}
    \max _{\mA \in \mathbb{R}^{d \times D}} I(\mA \rvx ; \ry) 
    \geq &\max_{\vv_1 \in \mathbb{R}^D} I\left(\vv_1^{\top} \rvx ; \ry\right) \nonumber \\
    &+ \max_{\vv_2 \in \mathbb{R}^D} I\left(\vv_2^{\top}\left(I-\vv_1^* {\vv_1^*}^\top\right) \rvx ; \ry\right) \nonumber \\
    &+ \max_{\vv_3 \in \mathbb{R}^D} I\left(\vv_3^{\top}\left(I-\vv_2^* {\vv_2^\star}^\top\right)\left(I-\vv_1^{\star} {\vv_1^\star}^\top\right) \rvx ; \ry\right) + \ldots - \gO(1),
\end{align}
where $\vv_1^\star, \vv_2^\star, \ldots, \vv_d^\star$ denote the solutions to each subsequent max term.
This sequence of solutions is the same as that returned by solving the following iterative optimization problem because maximizing mutual information with label for a linear projection of the input is the same as finding a direction that minimizes Bayes error of the linear projection (\citet{petridis2004relation} connects mutual information to cross entropy loss and Bayes error):
\begin{enumerate}
    \item $\vv_1^*  = \arg\min _{\|\vv\| \leq 1} l(\langle \vv, \rvx\rangle, \ry)$
    \item Project data in the null space of $\vv_1^*$: $(I - {\vv_1^*\vv_1^*}^\top) \rvx$
    \item Re-solve (1.) to get next $\vv_i^*$ and so on.
\end{enumerate}

Finally, it is easy to see that solution returned by the above iterative optimization is the same as that returned by the project step of \ours.

\textbf{Discussion on assumptions:} Following are some remarks and intuitions behind the assumptions we make:

\begin{itemize}
    \item \textbf{Sub-gaussianity}: We need sub-gaussianity to bound the entropy of linear projections, which is easily satisfied for inputs with bounded support. Note that the sub-gaussian parameter $\sigma$ need only satisfy $\log \sigma = \gO(1/d^2),$ where $d \ll D$ which is the input dimension of the data. 

    \item \textbf{Bounded JS-divergence}: The main intuition behind why we need the class conditional distributions to not differ too much (bounded JS-divergence) along linear projections is that if they are very different from each other it is possible that even with sub-gaussianity assumptions there may exist linear projections that have a high mutual information over the mixture of conditionals (which is the marginal input distribution $p(\rvx)$ i.e., $I(\vv^\top \rvx; \mA'\rvx)$ is high) but not when we condition on the label (i.e., $I(\vv^\top \rvx ; \mA' \rvx \mid \ry)$ is low).
    Now, since we iteratively search for linear projections, our project step is oblivious to these interactions and we may recover both of these directions (see \eqref{eq:mi-decomp} and Lemma~\ref{lem:ent-mixture-sub-gauss}). But only one may be present in the information theoretically optimal linear projection.

\end{itemize}
\textbf{Remark on tightness of our lower bound:} We show that we maximize a lower bound in \eqref{eqn:star-2}. But, in the special case when the class conditionals are log-concave (e.g., multivariate Gaussian) we can also show something much tighter: $\max _{\mA \in \mathbb{B}_d} I(\mA \rvx ; \ry) = \max_{\mA \in \mathbb{B}_{d}} \left(\sum_{i=1}^{i=d} I(\mA_i^\top \rvx ; y) \right) - \Theta(1)$. This is because our upper bounds on the entropy terms have matching lower bounds for log-concave distributions, which can then be applied to lower bound the negative terms in the first step of \eqref{eq:mi-decomp}.

\end{proof}

We now provide proofs of the generalization bounds in Section~\ref{sec:analysis} showing the bias-variance tradeoff.

\begin{lemma}[generalization bound for probing projected features]
\label{lem:gen-error}
For an $L$-Lipshitz, $B$-bounded loss $l$, with probability $\geq 1-\delta$,  $\hat{\rvw}_d$ in \eqref{eqn:d_features} has generalization error $ 
    \lsim \frac{\sqrt{d} + B\sqrt{\log(1/\delta)}}{\sqrt{M}}$, when $\|\rvx\|_\infty = O(1)$. 
\end{lemma}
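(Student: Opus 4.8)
The quantity to bound is the estimation (variance) term $\gL_T(\hat{\rvw}_d) - \min_{\rvw \in \mathrm{span}(\{\Pi_i\}_{i=1}^d),\, \|\rvw\|_2 \le 1} \gL_T(\rvw)$, i.e., the excess risk of the constrained empirical risk minimizer $\hat{\rvw}_d$ relative to the best norm-bounded predictor inside the $d$-dimensional hypothesis class $\gH_d \triangleq \{\rvx \mapsto \langle \rvw, \rvx\rangle : \rvw \in \mathrm{span}(\{\Pi_i\}_{i=1}^d),\ \|\rvw\|_2 \le 1\}$. The plan is the classical three-step ERM argument. First, because $\hat{\rvw}_d$ minimizes the empirical target risk $\hat{\gL}_T$ over $\gH_d$, comparing it with the in-class population minimizer bounds this excess risk by $2\sup_{h \in \gH_d}|\gL_T(h) - \hat{\gL}_T(h)|$. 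Second, I would decompose this uniform deviation into its expectation, controlled by symmetrization and the Rademacher complexity of the induced loss class $\{(\rvx,\ry)\mapsto l(\langle\rvw,\rvx\rangle,\ry) : h \in \gH_d\}$, plus a high-probability fluctuation term.

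\textbf{Rademacher bound.} For the fluctuation, since $l$ is $B$-bounded, changing one of the $M$ samples moves $\hat{\gL}_T$ by at most $B/M$, so McDiarmid's bounded-differences inequality gives a deviation of order $B\sqrt{\log(1/\delta)/M}$ with probability at least $1-\delta$, supplying the $B\sqrt{\log(1/\delta)}/\sqrt{M}$ summand. For the expectation, I would apply the Ledoux--Talagrand contraction inequality: since $l$ is $L$-Lipschitz in its first argument, the Rademacher complexity of the loss class is at most $L$ times that of the linear class $\gH_d$. It then remains to bound $\gR_M(\gH_d)$. Writing $\rvw = \sum_{j\le d}\alpha_j \Pi_j$ with $\|\alpha\|_2 = \|\rvw\|_2 \le 1$ (using orthonormality of the $\Pi_j$) and setting $\rvz^{(i)} \triangleq \Pi\rvx^{(i)} \in \Real^d$ so that $\langle \rvw, \rvx^{(i)}\rangle = \langle \alpha, \rvz^{(i)}\rangle$, the empirical Rademacher complexity over i.i.d. signs $\epsilon_i \in \{\pm 1\}$ satisfies
\begin{align*}
\gR_M(\gH_d) = \frac{1}{M}\,\E_{\epsilon}\sup_{\|\alpha\|_2\le 1}\Big\langle \alpha,\ \sum_{i=1}^M \epsilon_i\,\rvz^{(i)}\Big\rangle = \frac{1}{M}\,\E_\epsilon\Big\|\sum_{i=1}^M \epsilon_i\,\rvz^{(i)}\Big\|_2 \le \frac{1}{M}\Big(\sum_{i=1}^M \|\rvz^{(i)}\|_2^2\Big)^{1/2},
\end{align*}
where the last step is Jensen together with $\E_\epsilon[\epsilon_i\epsilon_j] = \one\{i=j\}$.

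\textbf{Main obstacle.} The crux is converting $\|\rvx\|_\infty = \gO(1)$ into the per-sample bound $\|\rvz^{(i)}\|_2 = \gO(\sqrt{d})$ that produces the advertised $\sqrt{d}$ scaling: since $\rvz = \Pi\rvx$ has exactly $d$ coordinates $\langle \Pi_j,\rvx\rangle$, controlling each coordinate by $\gO(1)$ yields $\|\rvz\|_2^2 = \sum_{j\le d}\langle\Pi_j,\rvx\rangle^2 = \gO(d)$, hence $\gR_M(\gH_d) = \gO(\sqrt{d}/\sqrt{M})$. Chaining the three steps then gives excess risk $\lsim L\sqrt{d}/\sqrt{M} + B\sqrt{\log(1/\delta)/M}$, which matches the claim once $L$ is absorbed as a constant. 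I expect the coordinate-boundedness step to be the delicate point, since a generic unit vector $\Pi_j$ only gives $|\langle\Pi_j,\rvx\rangle| \le \|\Pi_j\|_1\|\rvx\|_\infty$, with $\|\Pi_j\|_1$ possibly as large as $\sqrt{D}$; making the clean $\sqrt{d}$ (rather than $\sqrt{D}$) dependence rigorous requires either reading $\|\rvx\|_\infty = \gO(1)$ as a bound on the (normalized) projected features $\rvz$ directly, or imposing mild regularity on the rows of $\Pi$ so that each projected coordinate is indeed $\gO(1)$.
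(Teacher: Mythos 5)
Your proof is correct and takes essentially the same route as the paper's: the paper likewise reduces the claim to uniform convergence over the $\ell_2$-norm-bounded linear class plus a Rademacher complexity bound $\gR_M(\gW) \lsim \sqrt{d/M}$, simply citing \citet{bartlett2002rademacher} for the high-probability step and \citet{kakade2008complexity} for the complexity bound where you rederive both from first principles (McDiarmid, symmetrization with Ledoux--Talagrand contraction, and Jensen). The delicate point you flag is genuine and is handled in the paper only implicitly: its chain $\|\pmi_d \rvx\|_2 \leq \opnorm{\pmi_d}\|\rvx\|_2 \leq \opnorm{\pmi_d}\sqrt{d}\,\|\rvx\|_\infty \lsim \sqrt{d}$ is valid only if $\|\rvx\|_\infty = \gO(1)$ is read as a bound on the $d$ projected coordinates $\langle \Pi_j, \rvx\rangle$ (otherwise the middle inequality would give $\sqrt{D}$), which is exactly the first of the two resolutions you propose.
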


\begin{proof}
For this proof, we invoke the following two lemmas. 

\begin{sublemma}[generalization bound for linear functions~\citet{bartlett2002rademacher}]
\label{lem:margin-bd}
For an $L$-Lipshitz $B$-bounded loss $l$, the generalization error for predictor $\hat{\rvw}_d$, contained in the class of $l_2$ norm bounded linear predictors $\gW$ is bounded with probability $\geq 1-\delta$:
\begin{align*}
    l(\langle \hat{\rvw}_d, \rvx \rangle, \ry) \; - \; \sum_{i=1}^{M} l(\langle \rvw, \pmi_d \rvx^{(i)} \rangle,\; \ry^{(i)}) \; \leq \; 2 L \gR_n(\gW) + B \sqrt{\frac{\log (1/\delta)}{2M}}
\end{align*}
where $\gR_n(\gW)$ is the empirical Rademacher complexity of $l_2$ norm bounded linear predictors.
\end{sublemma}

\begin{sublemma}[$\gR_n(\gW)$ bound for linear functions~\cite{kakade2008complexity}]
\label{prp:rad-bd}
Let $\gW$ be a convex set inducing the set of linear functions $\gF(\gW) \triangleq \{\langle \rvw, \rvx \rangle: \gX \mapsto \Real \mid w \in \gW\}$ for some input space  $\gX$, bounded in norm $\|\cdot\|$ by some value $R>0$. 
If there exists a mapping $h:\gW \mapsto \Real$ that is $\kappa$-strongly convex with respect to the dual norm $\|\cdot\|_*$  and some subset $\gW' \subseteq \gW$ takes bounded values of $h(\cdot)$ \ie $\{h(\rvw) \leq K \mid \rvw \in \gW'\}$ for some $K>0$, then the empirical Rademacher complexity of the subset $\gW'$ is bounded by $\gR_n(\gF(\gW')) \leq R\sqrt{\frac{2K}{\kappa n}}$.   
\end{sublemma}

Let $\|\cdot\|_2^2$ be the function $h:\gW \mapsto \Real$ in Lemma~\ref{prp:rad-bd}; we know that $\|\cdot\|_2^2$ is $2$-strongly convex in $l_2$ norm. 
Further, take the standard $l_2$ norm as the norm over $\gX$. 
So, the dual norm $\|\cdot\|_*$ is also given by $l_2$ norm.
Thus, $\kappa = 2$. 
We also know that $\gW$ is bounded in $\|\cdot\|_2$ by $1$, based on our setup definition. 
Thus, $K=1$. 

Further, we note that $\|\rvx\|_\infty  = O(1)$. 
We apply Cauchy-Schwartz and use the fact that $\opnorm{\pmi_d}=1$ to bound the norm of the projected vector: 
\begin{align}
    \|\pmi_d \rvx \| \leq \opnorm{\pmi_d} \|\rvx\|_2 \leq \opnorm{\pmi_d} \sqrt{d} \|\rvx\|_\infty \lsim \sqrt{d}.
\end{align}
By Lemma~\ref{prp:rad-bd} we get the empirical Rademacher complexity $\gR_M(\gW) \lsim \sqrt{d/M}$, and plugging this into Lemma~\ref{lem:margin-bd} yields the main result in Lemma~\ref{lem:gen-error}.

\end{proof}

\begin{theorem}[bias-variance tradeoff, Theorem~\ref{thm:bv-tradeoff}]
    When the conditions in Lemma~\ref{lem:bias-under-general-shift} hold and when $\|\rvx\|_\infty = \gO(1)$, for $B$-bounded loss $l$,  w.h.p. $1- \delta$, the excess risk for the solution $\hat{\rvw}_d$ of \ours{} that uses $d$ features is 
    \begin{align}
    \gL_T(\hat{\rvw}_d) - \min_{\rvw \in \gW} \gL_T(\rvw)
      \lsim \|(\ID - \pmi_d)\rvw_T^*\|_2 + \paren{\frac{\sqrt{d} + B\sqrt{\log(1/\delta)}}{\sqrt{M}}},
    \end{align}
    where the first term of the RHS controls the bias and the second controls the variance. 
\end{theorem}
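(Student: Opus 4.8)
The plan is to split the excess risk through the standard approximation--estimation decomposition and then control each piece with one of the two lemmas already proved: the bias (approximation) term with Lemma~\ref{lem:bias-under-general-shift} and the variance (estimation) term with Lemma~\ref{lem:gen-error}.

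First I would introduce the best-in-class predictor $\rvw_d^* \triangleq \argmin_{\rvw' \in \mathrm{span}(\{\Pi_i\}_{i=1}^d)} \gL_T(\rvw')$ and decompose
\begin{align*}
\gL_T(\hat{\rvw}_d) - \min_{\rvw \in \gW} \gL_T(\rvw) = \underbrace{\left[\gL_T(\hat{\rvw}_d) - \gL_T(\rvw_d^*)\right]}_{\text{estimation error}} + \underbrace{\left[\gL_T(\rvw_d^*) - \min_{\rvw \in \gW} \gL_T(\rvw)\right]}_{=\, b_d}.
\end{align*}
The second bracket is exactly the bias $b_d$ as defined in the main text, so Lemma~\ref{lem:bias-under-general-shift} immediately bounds it by $L \cdot \|(\ID - \pmi_d)\rvw_T^*\|_2$, which is the first term on the right-hand side once $L$ is absorbed into $\lsim$.

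For the estimation error I would run the textbook empirical-risk-minimization argument. Writing $\hat{\gL}_T$ for the empirical target risk over the $M$ samples, I add and subtract empirical risks to obtain
\begin{align*}
\gL_T(\hat{\rvw}_d) - \gL_T(\rvw_d^*) = \left[\gL_T(\hat{\rvw}_d) - \hat{\gL}_T(\hat{\rvw}_d)\right] + \left[\hat{\gL}_T(\hat{\rvw}_d) - \hat{\gL}_T(\rvw_d^*)\right] + \left[\hat{\gL}_T(\rvw_d^*) - \gL_T(\rvw_d^*)\right].
\end{align*}
The middle bracket is nonpositive because $\hat{\rvw}_d$ minimizes the empirical risk over the norm-bounded span that also contains $\rvw_d^*$; the remaining two brackets are instances of the uniform generalization gap over the norm-bounded linear class. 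Both are therefore controlled by Lemma~\ref{lem:gen-error}, giving a bound of order $(\sqrt{d} + B\sqrt{\log(1/\delta)})/\sqrt{M}$ via the Rademacher estimate $\gR_M(\gW) \lsim \sqrt{d/M}$ together with the $B$-bounded concentration term. This is the second term on the right-hand side.

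Combining the two bounds under a single high-probability event and collecting constants into $\lsim$ finishes the proof. The one point requiring care is the estimation-error step: Lemma~\ref{lem:gen-error} is phrased as a one-sided gap for the learned predictor, so I must ensure that $\rvw_d^*$ lies in the feasible set ($\|\rvw_d^*\|_2 \leq 1$) for the empirical-minimality inequality to apply, and that the uniform convergence statement covers the comparator $\rvw_d^*$ as well as $\hat{\rvw}_d$ (automatic, since the bound holds over the whole class). Everything else is routine rearrangement and constant-chasing.
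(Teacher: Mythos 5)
Your proposal matches the paper's proof: the same decomposition of the excess risk into the bias $b_d = \min_{\rvw' \in \mathrm{span}(\{\Pi_i\}_{i=1}^d)} \gL_T(\rvw') - \min_{\rvw \in \gW} \gL_T(\rvw)$, bounded via Lemma~\ref{lem:bias-under-general-shift}, plus the estimation error $\gL_T(\hat{\rvw}_d) - \min_{\rvw \in \mathrm{span}(\{\Pi_i\}_{i=1}^d)} \gL_T(\rvw)$, bounded via Lemma~\ref{lem:gen-error}. You simply spell out the standard uniform-convergence-plus-empirical-optimality argument that the paper compresses into a citation of Lemma~\ref{lem:gen-error}, so the proof is correct and essentially identical to the paper's.
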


\begin{proof}
The excess risk for $\hat{\rvw}_d$ is
\begin{align*}
    &\gL_T(\hat{\rvw}_d) - \min_{\rvw \in \gW} \gL_T(\rvw) \\
    &=\gL_T(\hat{\rvw}_d) - \min_{\rvw \in \textrm{span}\{\Pi_i\}_{i=1}^d}  \gL_T(\rvw) + \min_{\rvw \in \textrm{span}\{\Pi_i\}_{i=1}^d} \gL_T(\rvw)  -  \min_{\rvw \in \gW} \gL_T(\rvw) \\
    &=\paren{\min_{\rvw \in \textrm{span}\{\Pi_i\}_{i=1}^d} \gL_T(\rvw)  -  \min_{\rvw \in \gW} \gL_T(\rvw)} + \paren{\gL_T(\hat{\rvw}_d) - \min_{\rvw \in \textrm{span}\{\Pi_i\}_{i=1}^d}  \gL_T(\rvw)} \\
    &\lsim \|(\ID - \pmi_d)\rvw_T^*\|_2 + \paren{\frac{\sqrt{d} + B\sqrt{\log(1/\delta)}}{\sqrt{M}}} \numberthis
\end{align*}
where the first term is the bias (bounded using Lemma~\ref{lem:bias-under-general-shift}), and the second term is the generalization error or the variance (bounded using Lemma~\ref{lem:gen-error}).
\end{proof}

\begin{corollary}
\label{cor:hog-soln}
Under the SHOG model, $\Pi_1$ recovers the linear discriminant analysis (LDA) solution, \ie $\Pi_1 = \Sigma^{-1} (\mu_2 - \mu_1)/(\|\Sigma^{-1} (\mu_2-\mu_1)\|_2)$. 
\end{corollary}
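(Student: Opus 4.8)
The plan is to show that the first direction $\Pi_1$ produced by the Project step is exactly the LDA direction, by reducing the defining optimization to a generalized Rayleigh quotient. Because $\Pi_1$ is the first basis vector, it carries no orthogonality constraint, so it is simply the unit direction $\vv$ minimizing the expected source loss $\E_{p_S}\, l(\langle \vv, \rvx\rangle, \ry)$. I would first invoke the equivalence already established in the proof of Theorem~\ref{thm:opt-info} (via \citet{petridis2004relation}): for a proper loss, minimizing the expected loss over one-dimensional linear projections is the same as maximizing the mutual information $I(\vv^\top\rvx; \ry)$, equivalently minimizing the Bayes error of the projected one-dimensional problem. This turns the corollary into a purely distributional question: which unit direction makes the two SHOG class-conditionals most discriminable after projection.

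Next I would use the homoscedastic Gaussian structure. Projecting onto $\vv$ gives two one-dimensional Gaussians with a common variance, $\vv^\top\rvx \mid \text{class } k \sim \gN(\vv^\top\mu_k,\, \vv^\top\Sigma_S\vv)$, mixed with equal weights. For a balanced, equal-variance binary Gaussian mixture, both the mutual information and the Bayes error depend on $\vv$ only through the standardized separation
\begin{align*}
    s(\vv) \;=\; \frac{|\vv^\top(\mu_2-\mu_1)|}{\sqrt{\vv^\top\Sigma_S\vv}},
\end{align*}
and are strictly monotone in it (larger separation yields strictly smaller Bayes error and strictly larger mutual information, since after an affine reparametrization the projected mixture is $\tfrac12\gN(-s/2,1)+\tfrac12\gN(s/2,1)$). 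Hence the loss-minimizing direction is the maximizer of $s(\vv)$; as $s$ is scale-invariant I may drop the norm constraint and instead maximize the generalized Rayleigh quotient $(\vv^\top\delta)^2/(\vv^\top\Sigma_S\vv)$ with $\delta = \mu_2 - \mu_1$.

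Finally I would solve this in one line with Cauchy--Schwarz in the $\Sigma_S$ geometry: writing $\vv^\top\delta = (\Sigma_S^{1/2}\vv)^\top(\Sigma_S^{-1/2}\delta)$ gives $(\vv^\top\delta)^2 \le (\vv^\top\Sigma_S\vv)(\delta^\top\Sigma_S^{-1}\delta)$, with equality if and only if $\Sigma_S^{1/2}\vv \propto \Sigma_S^{-1/2}\delta$, i.e.\ $\vv \propto \Sigma_S^{-1}\delta$; normalizing yields $\Pi_1 = \Sigma_S^{-1}(\mu_2-\mu_1)/\|\Sigma_S^{-1}(\mu_2-\mu_1)\|_2$, the claimed LDA direction (here $\Sigma \equiv \Sigma_S$ since the projection is learned on source data). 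I expect the main obstacle to be the reduction in the second paragraph rather than the algebra: one must argue that the proper-loss minimizer, the MI maximizer, and the max-separation direction all coincide and that the maximizer is \emph{unique} up to scaling, which rests on the strict monotonicity of the relevant functionals in $s(\vv)$ for the equal-variance balanced mixture. A conceptually cleaner alternative for the existence half is to note that $(\Sigma_S^{-1}\delta)^\top\rvx$ is a sufficient statistic for $\ry$ in the homoscedastic model, so it attains $I(\vv^\top\rvx;\ry) = I(\rvx;\ry)$ and therefore maximizes mutual information by the data-processing inequality, with uniqueness again supplied by the strict Cauchy--Schwarz equality condition.
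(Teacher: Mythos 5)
Your proposal is correct and takes essentially the same route as the paper's proof: both reduce, via the loss--mutual-information--Bayes-error equivalence underlying Theorem~\ref{thm:opt-info} (the \citet{petridis2004relation} connection), to the claim that $\Pi_1$ is the Bayes-optimal linear direction, which under the homoscedastic Gaussian model is the LDA direction $\Sigma_S^{-1}(\mu_2-\mu_1)$. The only difference is one of detail: the paper cites LDA's Bayes optimality as a known fact, whereas you derive it explicitly through the standardized separation $s(\vv)$, its strict monotonicity, and the Cauchy--Schwarz/Rayleigh-quotient argument, which also supplies the uniqueness (up to sign and scale) that the paper leaves implicit.
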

    
\begin{proof}
    Since the LDA solution is Bayes optimal under the HOG model, it is exactly characterized by the top eigen vector of $\Sigma^{-1} (\mu_2-\mu_1)(\mu_2-\mu_1)^\top$. 
    Thus, the Bayes optimal solution on target $ \rvw^*_T \propto \Sigma^{-1} (\mu_2-\mu_1)$, and since $\Pi_1$ returns the Bayes optimal linear predictor, following Theorem~\ref{thm:opt-info}, the above corollary is proven. 
\end{proof}

\begin{lemma}[bias under SHOG]
\label{lem:bias-under-shog}
Let $\pmi_d$ be the projection returned by \ours{}.
The bias $b_d$ term under our SHOG is  
$b_d \lsim\|(\ID -  \bv_S\bv_S^\top) \bv_T \|$. Here, $\bv_S = \frac{\Sigma_S^{-1} \bmu}{\|\Sigma_S^{-1} \bmu\|_2} $ and  
$\bv_T = \frac{\Sigma_T^{-1} \bmu} {\|\Sigma_T^{-1} \bmu\|_2}$. Further, when $\|\Sigma_S\|_{\mathrm{op}}$ is bounded, and $\pmi_d$ is a random rank $d$ projection matrix, $b_d = \gO\paren{\sqrt{1-\frac{d}{D}} \cdot \mathrm{KL}(p_S || p_T)}$.
\end{lemma}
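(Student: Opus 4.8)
The plan is to split the statement into its two halves and obtain each by specialising the generic bound in Lemma~\ref{lem:bias-under-general-shift} to the closed forms available in the SHOG model.

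For the first half (the \ours{} projection), I would start from the generic bound $b_d \le L\,\|(\ID - \pmi_d)\rvw_T^*\|_2$. Corollary~\ref{cor:hog-soln} supplies the two ingredients: the target Bayes-optimal direction is $\rvw_T^* \propto \Sigma_T^{-1}\bmu$, i.e.\ $\rvw_T^* = \bv_T$ up to scale, while the first vector returned by the Project step is $\Pi_1 = \Sigma_S^{-1}\bmu/\|\Sigma_S^{-1}\bmu\|_2 = \bv_S$. The key structural observation is that $\bv_S = \Pi_1 \in \mathrm{span}(\{\Pi_i\}_{i=1}^d)$ for every $d \ge 1$, so $\pmi_d$ projects onto a subspace containing the line through $\bv_S$. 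Residuals are monotone under this inclusion: for nested orthogonal projections, $\|(\ID - \pmi_d)\bm{u}\|_2 \le \|(\ID - \bv_S\bv_S^\top)\bm{u}\|_2$ for any $\bm{u}$. Applying this with $\bm{u} = \bv_T$ and absorbing $L$ yields $b_d \lsim \|(\ID - \bv_S\bv_S^\top)\bv_T\|_2$, exactly the claimed bound. This half is bookkeeping on top of the two earlier results.

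For the second half (random projection), I would invoke the random-projection branch of Lemma~\ref{lem:bias-under-general-shift}, which gives $b_d \lsim \sqrt{1-d/D}\,\|\rvw_T^*\|_2$, since a uniformly random rank-$d$ subspace leaves (in expectation / w.h.p.) a $\sqrt{1-d/D}$ fraction of the norm of any fixed vector in its orthogonal complement. It then remains to bound the SHOG quantity $\|\rvw_T^*\|_2 = \|\Sigma_T^{-1}\bmu\|_2$ by $\gO(\mathrm{KL}(p_S \| p_T))$. For this I would use the closed form of the Gaussian KL: because source and target share means, joint convexity of KL over the two equally-weighted mixture components (whose mean terms cancel) reduces $\mathrm{KL}(p_S\|p_T)$ to the covariance-only expression $\tfrac12(\mathrm{tr}(\Sigma_T^{-1}\Sigma_S) - D - \log\det(\Sigma_T^{-1}\Sigma_S))$. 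The operator-norm assumptions $\opnorm{\Sigma_S} = \gO(1)$ (and $\opnorm{\Sigma_T} = \gO(1)$ for the corollary) then control the conversion constants, translating the covariance mismatch inside $\Sigma_T^{-1}\bmu$ into the trace / log-determinant mismatch defining the KL.

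The main obstacle is precisely this last conversion: relating the magnitude of the target-optimal predictor (built from $\Sigma_T^{-1}$ and $\bmu$) to the KL divergence (built from the eigenvalue mismatch between $\Sigma_S$ and $\Sigma_T$). The clean regime is when the two covariances are comparable up to $\gO(1)$ operator-norm factors, so that the eigenvalue gaps inflating $\|\Sigma_T^{-1}\bmu\|_2$ are the same gaps making the trace and log-determinant terms large; the delicate point is checking that the bound degrades gracefully rather than becoming vacuous, which is exactly where boundedness of $\opnorm{\Sigma_S}$ and $\opnorm{\Sigma_T}$ is used. I would also verify the normalisation of $\rvw_T^*$ forced by the constraint set $\gW$ (the unit ball), since this determines whether the residual is measured against a unit-norm or an unnormalised direction and thus whether the suppressed constants are genuinely $\gO(1)$.
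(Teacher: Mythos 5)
Your first half coincides exactly with the paper's argument: the paper chains $b_d \le \|(\ID - \pmi_d)\rvw_T^*\|_2 \le \|(\ID - \pmi_1)\rvw_T^*\|_2 = \|(\ID - \bv_S\bv_S^\top)\bv_T\|$, using Corollary~\ref{cor:hog-soln} to identify $\pmi_1$ with $\bv_S\bv_S^\top$ and $\rvw_T^*$ (normalized, since $\gW$ is the unit ball) with $\bv_T$; your monotonicity-of-residuals step is precisely the inclusion $\mathrm{span}(\Pi_1) \subseteq \mathrm{span}(\{\Pi_i\}_{i=1}^d)$. For the second half you also follow the paper's route: the random-projection branch of Lemma~\ref{lem:bias-under-general-shift} gives $b_d \lsim \sqrt{1-d/D}\,\|\rvw_T^*\|_2$, and the joint-convexity reduction of $\mathrm{KL}(p_S\|p_T)$ to the equal-mean Gaussian expression $\tfrac12\mathrm{tr}(\Sigma_T^{-1}\Sigma_S) - \sum_i \log\lambda_i^S + \sum_i \log\lambda_i^T - D$ is the paper's display as well. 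You are right to single out the conversion $\|\Sigma_T^{-1}\bmu\|_2 \lsim \mathrm{KL}(p_S\|p_T)$ as the main obstacle --- but neither you nor the paper actually closes it.

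Concretely, the paper establishes $b_d \lsim \sqrt{1-d/D}\,\|\Sigma_T^{-1}(\mu_2-\mu_1)\|_2 \lsim \sqrt{1-d/D}\,\mathrm{tr}(\Sigma_T^{-1})$ and, separately, $\mathrm{KL}(p_S\|p_T) \lsim \mathrm{tr}(\Sigma_T^{-1})$ when $\opnorm{\Sigma_S} = O(1)$. Both inequalities upper-bound their left-hand sides by the \emph{same} trace, so they cannot be chained into $b_d \lsim \sqrt{1-d/D}\cdot\mathrm{KL}(p_S\|p_T)$; that would require the reverse bound $\mathrm{tr}(\Sigma_T^{-1}) \lsim \mathrm{KL}(p_S\|p_T)$, and the paper's own source contains an internal author note flagging exactly this direction mismatch. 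Moreover, the reverse bound cannot hold unconditionally: take $\Sigma_T = \Sigma_S$, so $\mathrm{KL}(p_S\|p_T) = 0$, yet under a random rank-$d$ projection the bias is of order $\sqrt{1-d/D}\,\|\rvw_T^*\|_2 > 0$, so the stated conclusion is vacuously violated in the no-shift case. It is a regime statement needing hypotheses beyond bounded operator norms --- e.g.\ $\lambda_{\min}(\Sigma_S) = \Omega(1)$ together with the eigenvalues of $\Sigma_T^{-1}\Sigma_S$ bounded away from $1$, which gives $\mathrm{KL}(p_S\|p_T) \gsim \mathrm{tr}(\Sigma_T^{-1}\Sigma_S) \gsim \mathrm{tr}(\Sigma_T^{-1})$ --- and your appeal to operator-norm assumptions ``controlling the conversion constants'' does not supply this: bounded $\opnorm{\Sigma_S}$ and $\opnorm{\Sigma_T}$ alone yield bounds in the wrong direction. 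In short, your plan reproduces the paper's proof, including its one genuine gap; the obstacle you honestly flagged but did not resolve is precisely where the published argument is also incomplete.
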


\begin{proof}
From Corollary~\ref{cor:hog-soln}, we know that $\pmi_1$ is exactly the rank-1 projection matrix given by the direction $\Sigma_S^{-1} (\mu_2 - \mu_1)/(\|\Sigma_S^{-1} (\mu_2-\mu_1)\|_2)$.
Therefore
\begin{align}
    b_d \leq \|(\ID - \pmi_d) \rvw^*_T \|_2 \leq |(\ID - \pmi_1) \rvw^*_T \|_2 = \|(\ID -  \bv_S\bv_S^\top) \bv_T \|.
\end{align}
This gives us the first result for $\bv_S, \bv_T$.

For the second result, we note that the KL divergence between multivariate Gaussian distributions is convex.
\begin{align}
\label{eq:2}
    \textrm{KL}(p_S||p_T) 
    &= \textrm{KL}(p(\ry)p_S(\rvx\mid \ry)||p(\ry)p_T(\rvx\mid \ry)) \nonumber \\
    &\leq \textrm{KL}(p_S(\rvx\mid \ry)||p_T(\rvx\mid \ry))  \nonumber \\
    &= 0.5 \cdot \textrm{KL}(\gN(\mu_1, \Sigma_S)|| \gN(\mu_1, \Sigma_T)) + 0.5 \cdot \textrm{KL}(\gN(\mu_2, \Sigma_S)||\gN(\mu_2, \Sigma_T)) \nonumber \\
    &= \frac{1}{2} \mathrm{tr}(\Sigma_T^{-1}\Sigma_S) - \sum_{i=1}^{D} \log \lambda_i^S +  \sum_{i=1}^{D} \log \lambda_i^T -D.
\end{align}
Refer to~\citet{wainwright2019high} for the final step, where $\lambda_i^S$ and $\lambda_i^T$ are the eigenvalues of source and target covariance matrices, respectively. 
The final term in the above derivation is $\gO(\mathrm{tr}({\Sigma_T^{-1}}))$ when $\opnorm{\Sigma_S} = O(1)$. 
From Lemma~\ref{lem:bias-under-general-shift} we know that under random projections onto $d$ dimensions, 
\begin{align}
\label{eq:1}
    b_d \; \leq \; L \cdot \sqrt{1-(d/D)} \|\rw^*_T\| \lsim \sqrt{1-(d/D)} \|\Sigma_T^{-1}(\mu_2-\mu_1)\| \lsim \mathrm{tr}({\Sigma_T^{-1}})
\end{align}
where we use Corollary~\ref{cor:hog-soln}. Thus from \cref{eq:1} and \cref{eq:2}, we get our desired bound:
\begin{align*}
    b_d \lsim \paren{\sqrt{1-\frac{d}{D}} \cdot \mathrm{KL}(p_S || p_T)}.
\end{align*}
\lyh{How do we get this bound from \cref{eq:1} and \cref{eq:2}? If I'm understanding correctly, the inequalities have the direction $b \leq \sqrt{\cdot} \mathrm{tr} \geq \sqrt{\cdot} \mathrm{KL}$?}
\end{proof}

\begin{corollary}[tradeoff under SHOG, Corollary~\ref{corr:hog-bv-tradeoff}]
    Under our SHOG model of shift, and conditions for a random projection $\pmi_d$ in Lemma~\ref{lem:bias-under-shog}, the target error $\gL_T(\hat{\rvw}_d) \lsim \gO\paren{\sqrt{1-\frac{d}{D}} \cdot \mathrm{KL}(p_S || p_T)} + \sqrt{\frac{d}{M}}$, when $\opnorm{\Sigma_T} = O(1)$.
\end{corollary}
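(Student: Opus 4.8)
The plan is to read this off from the general bias-variance decomposition of Theorem~\ref{thm:bv-tradeoff} by substituting the SHOG-specific bias estimate supplied by Lemma~\ref{lem:bias-under-shog}; since both ingredients are already established, the corollary is essentially a one-line specialization. First I would invoke Theorem~\ref{thm:bv-tradeoff}, whose hypotheses (Lipschitz-smooth convex loss, $\|\rvx\|_\infty = \gO(1)$, and $B$-boundedness) are all in force here, to write
\begin{align*}
\gL_T(\hat{\rvw}_d) - \min_{\rvw \in \gW}\gL_T(\rvw) \;\lsim\; \|(\ID - \pmi_d)\rvw_T^*\|_2 + \frac{\sqrt{d} + B\sqrt{\log(1/\delta)}}{\sqrt{M}}.
\end{align*}

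Next I would simplify the two terms separately. For the variance term, treating $B$ and the confidence level $\delta$ as fixed constants, we have $B\sqrt{\log(1/\delta)}/\sqrt{M} = \gO(1/\sqrt{M})$, which is dominated by $\sqrt{d/M}$ for every $d \geq 1$; hence the variance contribution collapses to $\sqrt{d/M}$. For the bias term, Lemma~\ref{lem:bias-under-shog} identifies $b_d = \|(\ID - \pmi_d)\rvw_T^*\|_2$ in the SHOG model and, for a random rank-$d$ projection with bounded covariance operator norm, bounds it by $\gO\paren{\sqrt{1 - d/D}\cdot \mathrm{KL}(p_S||p_T)}$. Substituting both simplifications and absorbing the Bayes term $\min_{\rvw \in \gW}\gL_T(\rvw)$, a constant independent of $d$ and $M$, into the reported target error gives
\begin{align*}
\gL_T(\hat{\rvw}_d) \;\lsim\; \gO\paren{\sqrt{1 - \tfrac{d}{D}}\cdot \mathrm{KL}(p_S||p_T)} + \sqrt{\frac{d}{M}},
\end{align*}
which is the claim.

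The one point requiring care is bookkeeping of the operator-norm hypothesis: the corollary is phrased with $\opnorm{\Sigma_T} = O(1)$, whereas the trace-to-KL step inside Lemma~\ref{lem:bias-under-shog} is argued with $\opnorm{\Sigma_S} = O(1)$, so I would make sure to invoke exactly the version of that bound whose constant controls $\mathrm{tr}(\Sigma_T^{-1})$ and keeps the eigenvalue comparison feeding the KL divergence valid. I expect the only genuine difficulty to sit upstream, inside Lemma~\ref{lem:bias-under-shog} rather than in the corollary proper: the chain there upper-bounds $b_d$ by $\mathrm{tr}(\Sigma_T^{-1})$ but then relates $\mathrm{tr}(\Sigma_T^{-1})$ to $\mathrm{KL}(p_S||p_T)$ through an inequality that, as written, points the wrong way (the KL expression lower-bounds the trace). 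Closing that gap cleanly would require either a spectral assumption on the shift guaranteeing $\mathrm{tr}(\Sigma_T^{-1}) \lsim \mathrm{KL}(p_S||p_T)$, or replacing the coarse trace step with a sharper direct estimate of $\|(\ID - \pmi_d)\rvw_T^*\|_2$ in terms of the KL divergence; granting that resolution, the corollary follows by the pure substitution above.
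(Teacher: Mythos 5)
Your proof is correct and takes essentially the same route as the paper, which likewise dispatches the corollary as a direct substitution of the variance bound from Lemma~\ref{lem:gen-error} and the SHOG bias bound from Lemma~\ref{lem:bias-under-shog} into the decomposition used to prove Theorem~\ref{thm:bv-tradeoff}. Your caveat is also well placed: the trace-to-KL inequality direction inside Lemma~\ref{lem:bias-under-shog} (and the $\opnorm{\Sigma_S}$ versus $\opnorm{\Sigma_T}$ bookkeeping) is a genuine soft spot---flagged in an author note in the paper's own source---but it lies upstream in that lemma, not in the corollary's substitution argument itself.
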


\begin{proof}
    Direct application of the variance result in Lemma~\ref{lem:gen-error} and bias result in Lemma~\ref{lem:bias-under-shog}, using the same technique used to prove Theorem~\ref{thm:bv-tradeoff}.
\end{proof}

\section{Experimental Details}
\label{sec:app-exp_details}

\subsection{PyTorch pseudocode for the projection step}
Below, we provide PyTorch pseudocode for the projection step of \ours for binary classification datasets.

\begin{python}
def learn_feature_space_basis(x, y, num_features):
    projection = torch.nn.Linear(x.shape[1], num_features)
    opt = torch.optim.AdamW(projection.parameters(), lr=0.01, weight_decay=0.01)
    max_steps = 100
    for i in range(max_steps):
        logits = projection(x)
        loss = F.binary_cross_entropy_with_logits(logits, y, reduction="none").mean()
        opt.zero_grad()
        loss.backward()
        opt.step()
        # Enforce orthogonality; we're performing projected gradient descent
        Q, R = torch.linalg.qr(projection.weight.detach().T)
        projection.weight.data = (Q * torch.diag(R)).T
    feature_space = projection.weight.detach().T
    return feature_space
\end{python}

\subsection{Additional dataset details}

\begin{itemize}
    \item \textbf{4-Way Collages}~\citep{teney2021evading}. This binary classification dataset consists of 4-way collages of four images per datapoint, one from each of (1) CIFAR, (2) MNIST, (3) Fashion-MNIST, and (4) SVHN. 
    All four image features are completely correlated in the source data, and we consider four target distributions, where only one of the image features are predictive of the label in each target distribution.
    \item \textbf{Waterbirds}~\citep{sagawa2019distributionally}. This dataset tasks the model with classifying images of birds as either a waterbird or landbird. The label is spurious correlated with the background of the image, which is either water or land. There are 4,795 training samples, of which 95\% of the data follows the spurious correlation. We use the original training set as the source data and evaluate on 3 different target distributions constructed from the original test dataset: (1) Minority, which contains the test data points that do not follow the spurious correlation,  (2) Spurious, containing the points that do, and (3) Balanced, which contains an equal number of points from each of the 4 (bird, background) groups.
    \item \textbf{CelebA}~\citep{liu2015faceattributes}. Similar to Waterbirds, we use the original training set as source data and evaluate on (1) Minority, (2) Spurious, and (3) Balanced target distributions. In our main experiments in \cref{sec:experiments}, we use target distributions corresponding to the spurious correlation typically used for evaluation (spurious attribute--gender with label--hair color). Below, in \cref{sec:app-exps} include additional results on 4 other variants following the settings used in \cite{lee2022diversify}: (1) CelebA-1 uses slightly open mouth as the label and wearing lipstick as the spurious attribute, (2) CelebA-2 uses attractive as the label and smiling as the spurious attribute, (3) CelebA-3 uses wavy hair as the label and high cheekbones as the spurious attribute, and finally (4) CelebA-4 uses heavy makeup as the label and big lips as the spurious attribute.
    \item \textbf{Camelyon17}~\citep{camelyon}. This dataset is part of the WILDS benchmark~\cite{koh2021wilds} and contains medical images where variations in data collection from different hospitals induce naturally occurring distribution shifts. We evaluate on 2 target distributions: (1) ID-Test: a held out test set of images from the source distribution, and (2) OOD-Test: the actual test distribution with a distribution shift due to evaluating data from a different hospital.
    \item \textbf{Living17}~\citep{santurkar2020breeds}. The task is to classify images into one of 17 animal categorie. This dataset presents a subpopulation shift, in that while the ID and OOD distributions have the same overall classes, they contain different subpopulations. We test on the given test set.
    \item \textbf{FMoW}~\citep{koh2021wilds}. This dataset contains satellite images from 5 geographic regions, and the task is the classify the image as one of 62 building or land use types. For the target distribution, we use the subset of the OOD test data belonging to the Africa region.
\end{itemize}

\textbf{Pre-trained models and additional training details.} 
We extract penultimate embeddings of all source and target datapoints from a pre-trained backbone.
We preprocess all datapoints according to the augmentation used during pre-training, and obtain feature embeddings with eval-mode batch normalization.
We cache all embeddings for a (backbone, dataset) pair to a single file and train our linear models from the cached file.
We use CLIP-ViT-L/16~\cite{dosovitskiy2020image} in our main experiments, and additionally experiment with ResNet18~\cite{he2016deep}, ResNet50, ResNet50-SWaV~\cite{caron2020unsupervised}, CLIP-ViT-B/16 models in \cref{exps:backbones}.
All pretrained models are publicly available online.
We train all models using the AdamW optimizer~\cite{loshchilov2017decoupled} with weight decay 0.01.
For all experiments, we perform early stopping with accuracy on held-out target data and report mean and standard deviation across $10$ runs.

\section{Additional Experimental Results}
\label{sec:app-exps}

\subsection{Additional visualizations for synthetic Gaussian experiment}
\label{subsec:app-hog}
\begin{figure}
    \centering
    \includegraphics[width=0.48\linewidth]{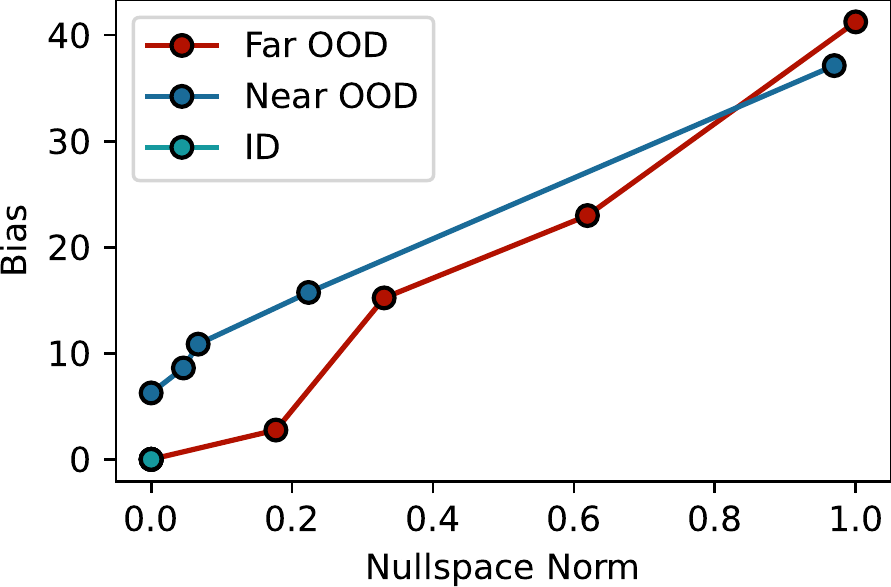} 
    \hfill
    \includegraphics[width=0.48\linewidth]{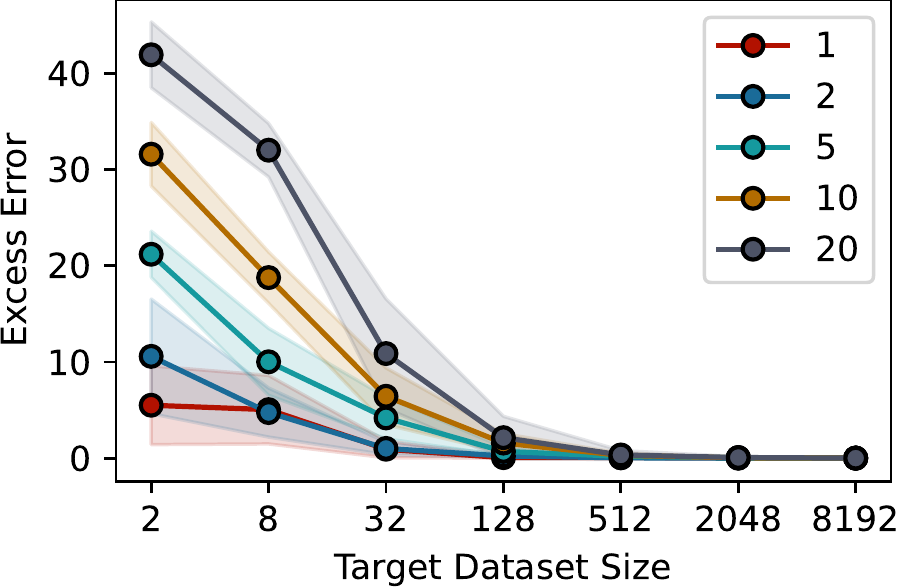}
    \caption{Visualization of bias and variance in the synthetic homoscedastic Gaussian experiment~\cref{fig:analysis_hog}.
    (Left) We approximate bias by the error at the largest target dataset size, and compare to the nullspace norm. The two quantities have a roughly linear relationship.
    (Right) We approximate variance by the difference between the error at each dataset size and the error at the largest. We report the average across the three test distributions. Note on the left plot, ID is easily learned and so the corresponding line is therefore clustered near (0, 0), as the nullspace norm and bias are both near 0.
    }
    \label{fig:additional_hog}
\end{figure}

In \cref{fig:additional_hog}, we approximate the bias and variance in the synthetic HOG experiment studied in~\cref{fig:analysis_hog}. On the left, for each test distribution (ID, Near OOD, and Far OOD), we plot the relationship between approximate bias (using error at the largest target dataset size) and nullspace norm and find that they have a roughly linear relationship. Thus, this plot empirically supports the connection supported in the theory between bias and the number of features used, as the nullspace norm decreases as the dimension of the feature-space basis increases.

\subsection{Empirical analysis of projected feature space}

We begin by observing the empirical properties of the projected feature space learned during the first projection phase of \ours{}. The Waterbirds dataset consists of ``spurious" groups where the background type (land or water) correlates with the bird type (land or water), on which using a shortcut feature that relies on background type will perform optimally, as well as ``minority" groups in which the correlation does not hold and requires a robust feature that focuses on the bird itself.
On this dataset, we first extract oracle shortcut and robust features by minimizing loss on spurious and minority groups on target data, respectively.
These two directions serve as proxies for the optimal classifier on two different target distributions.
In addition to \ours{}, we also evaluate a random feature extraction method, which simply samples a random orthonormal basis for the original $\Real^D$ embedding space.
We plot the nullspace norm of these two features in the subspace spanned by the first $k$ directions, for $1\leq k \leq D=1024$ in \cref{fig:cumul-similarities}.
As expected, we see that the earlier features learned by \ours are more similar to the shortcut feature than the robust feature.
Because the orthogonality constraint forces the features to be different from each other, the nullspace norm reduces to zero at the highest value $k=1024$.
This experiment shows that the basis learned by \ours contains both the robust and shortcut features for this dataset, and that the robust and shortcut features emerge even for very low-rank bases (i.e., for small values of $d$).
In contrast, a random orthogonal basis only captures these two predictive features when the rank is larger. 
This indicates that our orthogonal projection approach quickly picks up on the most important directions in feature space, which in this case correspond to the shortcut feature representing the background and the robust feature representing the type of bird, as discussed in prior work~\citep{sagawa2019distributionally}.

\subsection{Using various pretrained backbones}
\label{exps:backbones}
\begin{figure} %
\centering
\includegraphics[width=0.5\linewidth]{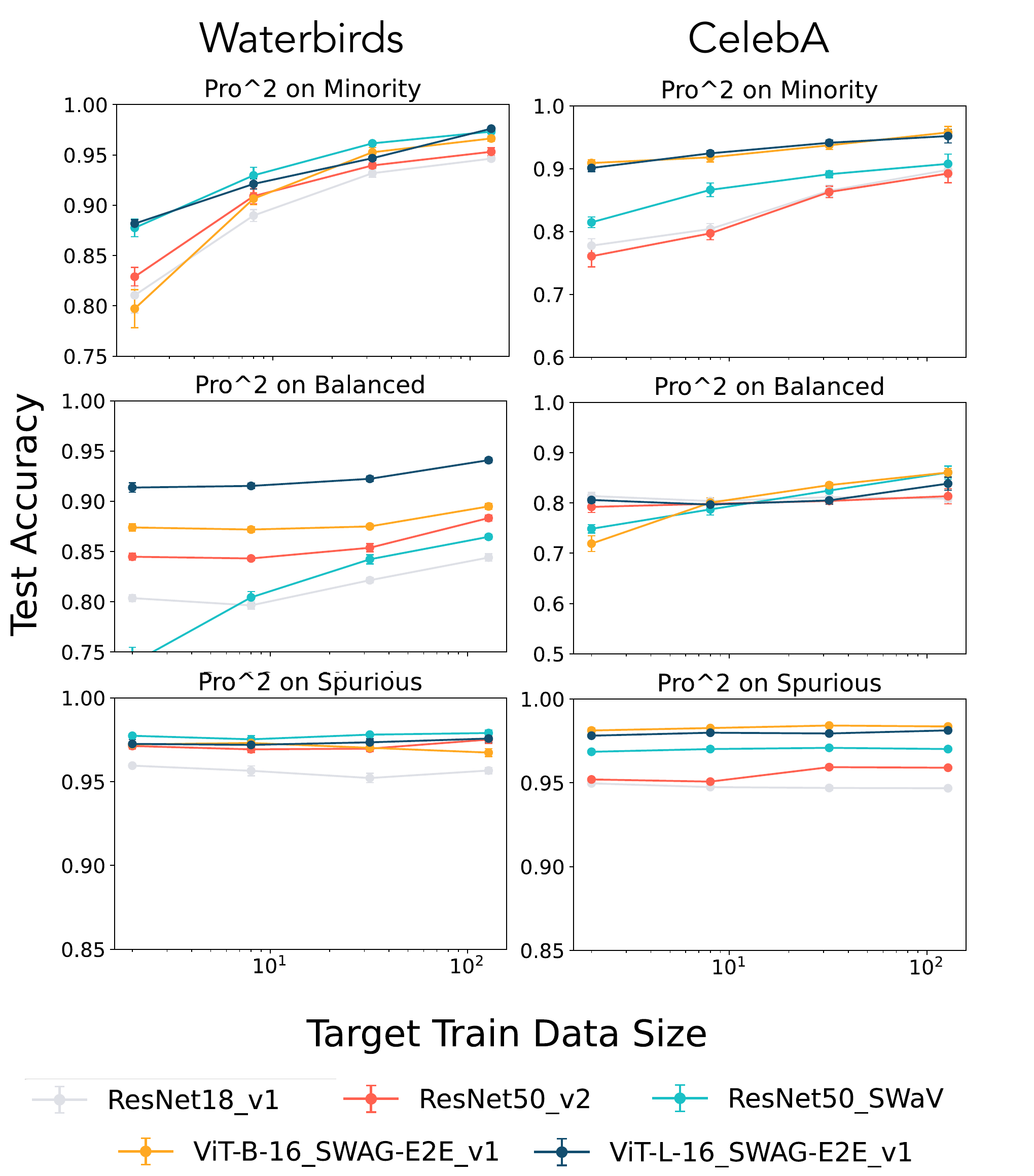} 
\caption{
\textbf{Different backbones.}
    We show the accuracy of \ours, where we use various pretrained backbones, which are not fine-tuned.
    \ours is able to leverage improvements in the backbone with minimal computational overhead. 
}
\label{fig:backbones}
\end{figure}

Finally, as \ours relies on using a pre-trained backbone model that is not fine-tuned to initially extract features, we study how different backbones affect performance.
In \cref{fig:backbones}, we plot the accuracy of \ours using 5 pre-trained backbone models that achieve a range of Image-Net accuracies. 
We find that \ours improves significantly with better pre-trained backbones.  
These experiments demonstrate the promise of the \ours{} framework. 
The quality of pre-trained feature extractors will continue to improve with future datasets and architectures, and \ours{} leverages such pre-trained backbone models for distribution-shift adaptation in a computationally efficient manner.

\subsection{Ablation on the importance of enforcing orthogonality}
\begin{figure} %
\centering
\includegraphics[width=1.0\linewidth]{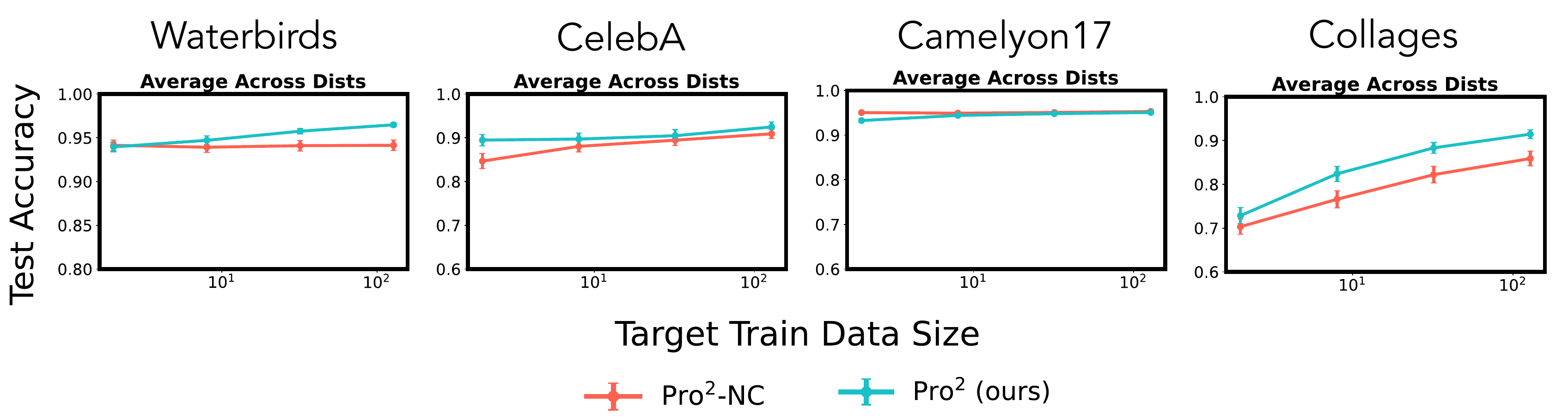} 
\caption{
\textbf{Importance of orthogonality.}
    We show the adaptation accuracy of \ours compared to \ours-NC, a variant without orthogonality enforced, averaged across the varying target distributions for each dataset. 
}
\label{fig:no-constraint}
\end{figure}
For the purposes of our empirical analysis, we additionally consider a simpler variant that optimizes the projection matrix $\Pi$ with \textbf{N}o \textbf{C}onstraint on orthogonality:
\begin{align}
    \Pi_i = &\argmin \mathbb{E}_{(x, y) \sim \mathcal{D}_S} \mathcal{L}(\Pi_i(f(x)), y). 
    \tag{\ours{}-NC}
\end{align}
We compare \ours to \ours-NC in \cref{fig:no-constraint}. While \ours-NC is is sufficient in some scenarios with milder distribution shift, where the shortcut feature continues to be informative, it fails to learn a diverse set of predictive features and often only learns shortcut features, often failing on more severe distribution shifts.

\subsection{Evaluation on additional CelebA variants}

\begin{figure*}[t]
\centering
\vspace{-8pt}
\includegraphics[width=1.0\textwidth]{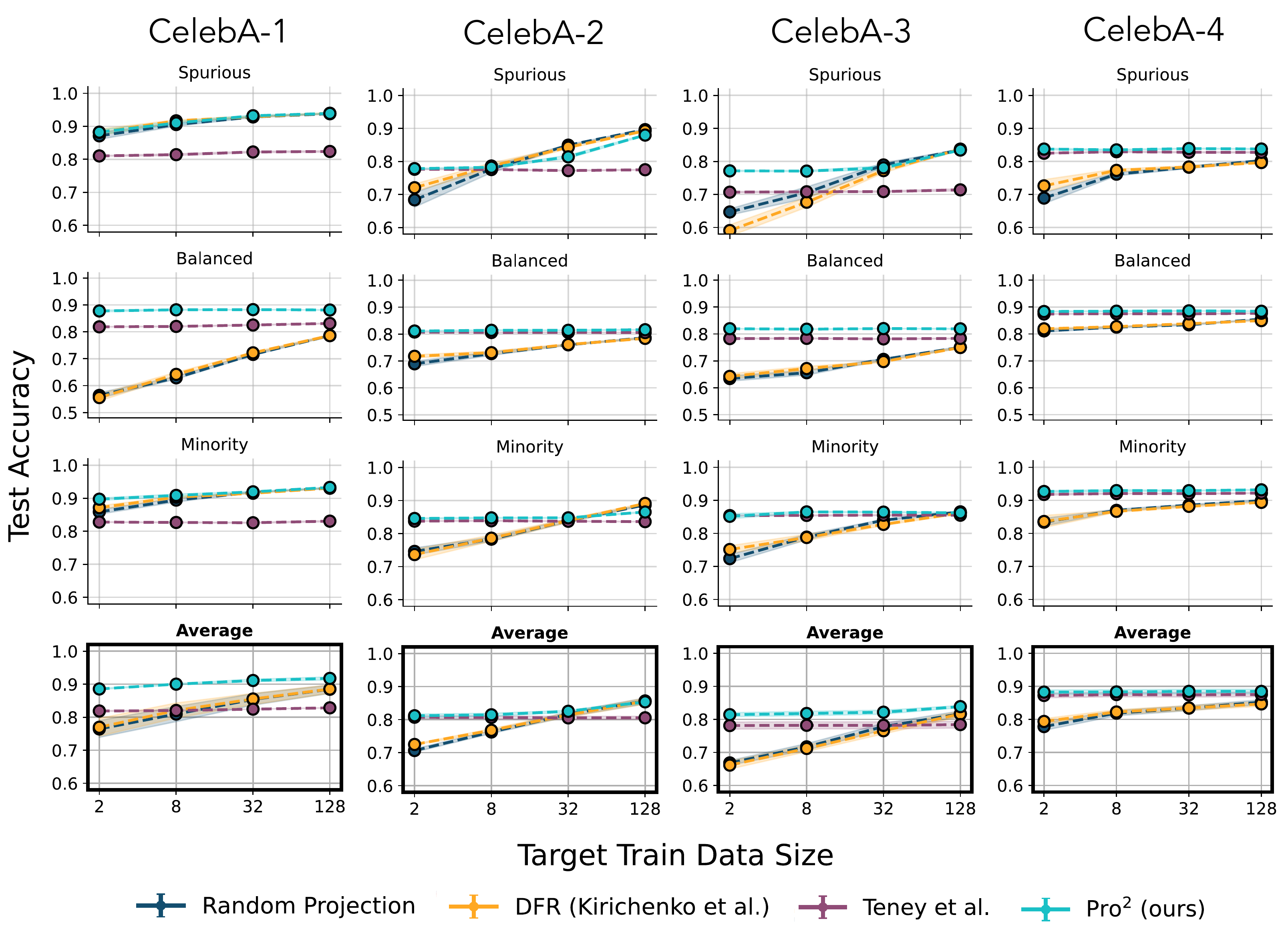}
\vspace{-15pt}
\caption{
    \textbf{Main results on additional CelebA variants.} We compare $4$ different methods for learning features to adapt to a target distribution: (1) Random Projection, (2) DFR \citet{kirichenko2022last}, i.e. standard linear probing, (3) \cite{teney2021evading}, and (4) \ours.
    We report target accuracies after probing with different target dataset sizes; we report mean and standard deviation across $10$ runs.
    Similar to the trends seen in \cref{fig:main-exps}, \ours achieves high accuracy in the low-data regime, substantially outperforming both random orthogonal projection and no projection in most target distributions on all four datasets.
}
\label{fig:celeba-variants}
\end{figure*}

Finally, in~\cref{fig:celeba-variants} we supplement our main results in~\cref{fig:main-exps} with additional results from 4 additional variants of CelebA. The takeaways from these results line up with those from \cref{fig:main-exps}. In the few-shot adaptation problem setting, \ours is consistently the most effective, compared to Random Projection, DFR~\cite{kirichenko2022last}, which uses standard linear probing, and \cite{teney2021evading}.

\end{document}